\newtheorem{theorem}{Theorem}
\newtheorem{conjecture}{Conjecture}
\newtheorem{lemma}{Lemma}[section]
\newtheorem{proposition}[lemma]{Proposition}
\newtheorem{corollary}[lemma]{Corollary}
\newtheorem{assumption}{Assumption}
\newtheorem{remark}[lemma]{Remark}
\DeclareMathOperator{\var}{Var}
\newlength{\leftstackrelawd}
\newlength{\leftstackrelbwd}
\def\leftstackrel#1#2{\settowidth{\leftstackrelawd}%
{${{}^{#1}}$}\settowidth{\leftstackrelbwd}{$#2$}%
\addtolength{\leftstackrelawd}{-\leftstackrelbwd}%
\leavevmode\ifthenelse{\lengthtest{\leftstackrelawd>0pt}}%
{\kern-.5\leftstackrelawd}{}\mathrel{\mathop{#2}\limits^{#1}}}
\newcommand{\ud}{\,\mathrm{d}}
\newcommand{\pp}{\mathbb{P}}
\newcommand{\E}{\mathbb{E}}
\newcommand{\R}{\mathbb{R}}
\newcommand{\barmu}{\bar{\mu}}
\newcommand{\refr}{\mathrm{refr}}
\newcommand{\tdet}{\tilde{t}}
\newcommand{\lw}[1]{\textcolor{blue}{\textbf{[LW: #1]}}}
\providecommand{\keywords}[1]
{
  \small	
  \textbf{Keywords:} #1
}
\title{Complexity of zigzag sampling algorithm for strongly log-concave distributions}
\author{Jianfeng Lu \thanks{Department of Mathematics, Department of Physics, and Department of Chemistry, Duke University, Durham NC 27708, USA (jianfeng@math.duke.edu)} \and {Lihan Wang \thanks{For correspondence; Department of Mathematics, Duke University, Durham NC 27708 (lihan@math.duke.edu). Current Address: Department of Mathematical Sciences, Carnegie Mellon University, 311 Hamerschlag Drive, Pittsburgh, PA, 15213, USA (lihanw@andrew.cmu.edu)}}}
\date{}
\begin{document}
\maketitle

\begin{abstract}
    We study the computational complexity of zigzag sampling algorithm for strongly log-concave distributions. The zigzag process has the advantage of not requiring time discretization for implementation, and that each proposed bouncing event requires only one evaluation of partial derivative of the potential, while its convergence rate is dimension independent. Using these properties, we prove that the zigzag sampling algorithm achieves $\varepsilon$ error in chi-square divergence with a computational cost equivalent to $O\bigl(\kappa^2 d^\frac{1}{2}(\log\frac{1}{\varepsilon})^{\frac{3}{2}}\bigr)$ gradient evaluations in the regime $\kappa \ll \frac{d}{\log d}$ under a warm start assumption, where $\kappa$ is the condition number and $d$ is the dimension.
    
\end{abstract}

\keywords{Monte Carlo sampling; zigzag sampler; log-concave distribution; computational complexity}

\section{Introduction and Main Results}
Monte Carlo sampling from a high-dimensional probability  distribution is a fundamental problem with applications in various areas including Bayesian statistics, machine learning, and statistical physics. Many sampling algorithms, especially those for continuous state space like $\mathbb{R}^d$, are based on continuous time Markov processes. Examples of these processes include the overdamped Langevin dynamics, whose invariant measure is the target measure, the underdamped Langevin dynamics and Hamiltonian Monte Carlo (HMC) \cite{duane1987hybrid}, both augment the state space with a velocity variable $v$, and have the $x$-marginal distribution of the invariant measure as the target measure. For strongly log-concave distributions, all these processes converge to the equilibrium exponentially fast with rates independent of the dimension, making them suitable for sampling purposes. On the other hand, all of these processes require time discretizations for implementation, which not only induces further numerical errors but requires the time step to be small as well, requiring higher computational complexity if a small bias is desired. To remove such bias due to discretization, the conventional procedure is to introduce the Metropolis-Hastings acceptance-rejection step, but rejections indicate waste of computational resources.

\smallskip

A very different line of sampling algorithms have been recently developed in statistical physics and statistics literature \cite{peters2012rejection}, which are based on piecewise deterministic Markov processes (PDMPs)~\cite{davis1984piecewise}. These processes are non-reversible, which may mix faster than reversible MCMC methods \cites{diaconis2000analysis,turitsyn2011irreversible}.
Examples of such samplers include the randomized Hamiltonian Monte Carlo \cite{bou2017randomized}, the zigzag process \cite{bierkens2019zig}, the bouncy particle sampler \cites{peters2012rejection,bouchard2018bouncy}, and some others \cites{vanetti2017piecewise, michel2014generalized, bierkens2020boomerang}. The zigzag and bouncy particle samplers are appealing for big data applications, as they can be unbiased even if stochastic gradient is used \cites{bouchard2018bouncy, bierkens2019zig}.  These algorithms, as they are still relatively new, have not yet been thoroughly analyzed. In particular, no non-asymptotic computational complexity bounds on these algorithms have been established yet, to the best of our knowledge. Our previous work \cite{lu2020explicit} gives explicit exponential convergence rates for the PDMPs with log-concave potentials, which opens the possibility of deriving such complexity bounds for PDMPs, and provides the foundation of this work.

\subsection{Algorithm and Assumptions}
Let $x$ denote the state variable in $\R^d$ where $d$ is the dimension.  The target distribution we want to sample from is denoted by \begin{equation*}
    \ud \mu(x) = Z^{-1} \exp(-U(x))\ud x,
\end{equation*} 
where $U(x)$ is the potential and $Z=\int_{\R^d} \exp(-U(x))\ud x$ is the normalizing constant. Although the zigzag process can also be applied to sample non log-concave distributions, we will restrict our analysis to strongly log-concave distributions, namely, we make the following assumption throughout:
\begin{assumption}\label{assump:condU}
The potential function $U(x)$ satisfies \begin{equation}\label{eqn:condU}
    m \mathrm{Id} \le \nabla^2 U(x) \le L \mathrm{Id},
\end{equation}
for some $0<m\le 1 \le L$. Moreover, $U(x)$ has a unique minimizer at $x=0$, and $U(0)=0$.
\end{assumption}

For any random variable $X$, we use $\rho(X)$ to denote its law. In this paper, we use chi-square divergence to measure the difference between two probability measures: for probability measures $\rho_1,\rho_2$ that $\rho_1 \ll \rho_2$, it is defined as
\begin{equation*}
\chi^2(\rho_1\,\Vert\, \rho_2) := \int_{\R^d} \Bigl(\frac{\ud \rho_1}{\ud \rho_2}-1\Bigr)^2 \ud \rho_2.
\end{equation*}

The zigzag sampling algorithm is based on a piecewise deterministic Markov process, called zigzag process. Besides the variable $x$, we augment the state space by an auxiliary velocity variable taking value in  $\R^d$. A trajectory of the zigzag process, denoted by $(X_t,V_t)$, can be described as follows. Given some initial $(X_0, V_0)$, the position $X_t$ always evolves according to $\frac{\ud}{\ud t}X_t=V_t$, while the velocity $V_t$ is piecewise constant which only changes when bouncing or refreshing events occur at some random time following Poisson clocks. Bouncing events on the $j$-th direction occur with rate $(V_t^{(j)}\partial_{x_j} U(X_t))_+$, and at such an event the velocity $V_t$ changes by flipping its $j$-th component to $-V_t^{(j)}$. Refreshing events occur with rate $\lambda$ for some fixed $\lambda>0$, when the velocity $V_t$ is completely redrawn from the standard normal $\mathcal{N}(0,\mathrm{Id})$.

\smallskip





It has been established \cites{andrieu2018hypocoercivity,bierkens2019ergodicity, lu2020explicit} that under Assumption \ref{assump:condU}, $\rho(X_t,V_t)$ converges to the invariant measure of the zigzag process, which is a product measure of the target measure in $x$ and the standard Gaussian in $v$: \begin{equation*}
    \ud \barmu(x,v)=\ud \mu(x)\ud \nu(v) \ \mbox{ where }\ud \nu(v)=(2\pi)^{-\frac{d}{2}}\exp\bigl(-\frac{|v|^2}{2}\bigr)\ud v.
\end{equation*}
Our analysis relies on the following more quantitative convergence result for zigzag process proved in \cite{lu2020explicit}, which also specifies the optimal choice of refreshing rate $\lambda$.\footnote{\cite{lu2020explicit} shows exponential convergence for the backward equation. By duality the exponential convergence of the backward equation in $L^2(\bar{\mu})$ is equivalent to the exponential convergence of the forward equation in $\chi^2$ with the same rate.} We would like to comment here that the choice of $\lambda= \sqrt{L}$ is completely technical since it optimizes the theoretical convergence rate (up to a universal constant) of the zigzag process established in \cite{lu2020explicit}. The zigzag process is ergodic even if $\lambda=0$ and in practice the choice $\lambda=0$ is common.
\begin{proposition}\cite{lu2020explicit}*{Theorem 1} \label{prop:choiceT}
Under Assumption \ref{assump:condU}, there exists a universal constant $K$ independent of all parameters, such that for any initial density $\barmu_0$, the zigzag process with friction parameter $\lambda = \sqrt{L}$  satisfies
  \begin{equation}\label{eqn:expcvg}
   \chi^2(\rho(X_T,V_T) \,\Vert\, \barmu) \le K\exp \bigl(-\frac{m}{K\sqrt{L}}T\bigr)\chi^2(\barmu_0 \,\Vert\, \barmu).
  \end{equation}
\end{proposition}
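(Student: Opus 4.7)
By the $L^2$--$\chi^2$ duality noted in the footnote, it suffices to show that the backward semigroup $P_t$ of the zigzag process contracts in $L^2(\barmu)$ at exponential rate of order $m/\sqrt L$. Writing the generator as
\begin{equation*}
    \opL f(x,v) = v\cdot\nabla_x f + \sum_{j=1}^d (v_j\partial_{x_j}U(x))_+ \bigl(f(x,R_j v)-f(x,v)\bigr) + \lambda\bigl(\Pi f - f\bigr),
\end{equation*}
where $R_j$ flips the $j$-th velocity component and $\Pi$ averages out $v$ against $\nu$, one first verifies $\barmu$-invariance (the bouncing terms balance after the change of variables $v_j\mapsto -v_j$, using $(a)_++(-a)_+=|a|$ and the Gaussian factor in $\barmu$). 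The obstruction to direct $L^2$ contraction is that the symmetric part of $-\opL$ only dissipates in $v$; a hypocoercivity argument is required to transfer this control to $x$.

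I would adopt the Dolbeault--Mouhot--Schmeiser framework and introduce the modified functional
\begin{equation*}
    H(f) = \|f\|_{L^2(\barmu)}^2 + 2\epsilon\,\langle Cf, f\rangle_{L^2(\barmu)}, \quad C = \bigl(\id + (A\Pi)^*(A\Pi)\bigr)^{-1}(A\Pi)^*,
\end{equation*}
with $A = v\cdot\nabla_x$ the transport generator and $\epsilon>0$ small, taking $f$ centered so that $\int f\,\ud\barmu = 0$. Differentiating $H(P_t f)$ produces three contributions: (i) the jump dissipation from $B := \opL - A$, which after symmetrization in $v$ yields a coercive lower bound of order $\min(\lambda,\sqrt L)\,\|(\id-\Pi)f\|^2$; (ii) the principal hypocoercive term $\epsilon\,\langle CA f,f\rangle$, which by strong log-concavity $\nabla^2 U\ge m\,\mathrm{Id}$ and the resulting Poincar\'e inequality for $\mu$ dominates $\epsilon(m/L)\|\Pi f\|^2$; and (iii) cross-error terms that are $O(\sqrt\epsilon)$ relative to (i) and (ii) and can be absorbed for sufficiently small $\epsilon$.

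Optimizing the parameters gives a total contraction rate of order $m\min(\lambda,\sqrt L)/L$, maximized at $\lambda\asymp\sqrt L$ with value $\asymp m/\sqrt L$, and this rate is dimension-independent because each bouncing contribution is a single-coordinate quadratic form controlled by the spectral bound $L$. The main obstacle I anticipate is step (i): unlike the clean $\|\nabla_v f\|^2$ dissipation of the kinetic Fokker--Planck equation, the zigzag jumps give the weighted finite-difference quadratic form $\sum_j (v_j\partial_{x_j}U)_+\,|f(x,v)-f(x,R_j v)|^2$, and one must pair each positive-part rate with its $v$-flipped counterpart to recover the symmetric kernel $|v_j\partial_{x_j}U|$ before relating it to $\|(\id-\Pi)f\|^2$ through a spectral gap for reflection on each coordinate's Gaussian half-line. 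Tracking the precise scaling that yields $\sqrt L$ (rather than $L$) in the denominator of the rate is where the analysis is most delicate, and it is this step that dictates the optimal refresh rate $\lambda=\sqrt L$.
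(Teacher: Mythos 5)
This proposition is not proved in the present paper; it is quoted verbatim as \cite{lu2020explicit}*{Theorem 1}, so there is no in-paper proof to compare your sketch against. That said, your outline is a faithful description of the route taken in \cite{lu2020explicit} (building on the Dolbeault--Mouhot--Schmeiser hypocoercivity framework as adapted to PDMPs by Andrieu--Dobson--Wang): one works with the backward semigroup in $L^2(\barmu)$, augments $\|f\|^2$ by the DMS corrector $2\epsilon\langle Cf,f\rangle$ with $C=(\id+(A\Pi)^*(A\Pi))^{-1}(A\Pi)^*$, obtains microscopic dissipation from the refreshment plus the symmetrized bouncing kernel, macroscopic coercivity from the $m$-Poincar\'e inequality for $\mu$ via $A\Pi$, and then tunes $\epsilon$ and $\lambda$ to balance the error terms. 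You also correctly flag the genuinely delicate point: the bouncing dissipation is a degenerate, $x$-dependent finite-difference quadratic form (rates $(v_j\partial_{x_j}U)_+$ vanish near the minimizer) rather than a clean $\|\nabla_v f\|^2$, so the refreshment term is essential for a uniform microscopic gap, and it is the interplay of these bounds with the $L$-dependent operator estimates that forces $\lambda\asymp\sqrt L$ and produces the $m/\sqrt L$ rate.

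Two small imprecisions, neither of which undermines the plan. First, the macroscopic coercivity from $\epsilon\langle CA\Pi f,\Pi f\rangle$ is of order $\epsilon\, m/(1+m)\,\|\Pi f\|^2\asymp \epsilon\, m\,\|\Pi f\|^2$ (recall $m\le 1$); the factor $1/L$ you wrote in item (ii) belongs instead in the constraint on $\epsilon$ coming from the operator bounds in item (iii), but since $\epsilon$ is ultimately chosen to balance those bounds, the two bookkeepings give the same final rate $\asymp m/\sqrt L$. Second, the cross terms in the DMS computation are $O(\epsilon)$ in absolute size and are absorbed by choosing $\epsilon$ small relative to the microscopic gap and the operator bounds; writing them as ``$O(\sqrt\epsilon)$ relative to (i) and (ii)'' is not the standard accounting, though it does not change the conclusion. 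Finally, it is worth noting that the microscopic dissipation from $B$ is bounded below by $\lambda\|(\id-\Pi)f\|^2$ alone (the bouncing part is nonnegative but has no uniform lower bound); the appearance of $\min(\lambda,\sqrt L)$ in the final rate comes from the absorption constraints on $\epsilon$, not from the dissipation estimate itself.
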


The left-hand side of \eqref{eqn:expcvg} controls desired divergence of $\rho(X)$ with respect to the target measure $\mu$, as we have 
\begin{align*}
    \chi^2(\rho(X_T,V_T)\,\Vert\,\barmu) & = \int_{\R^d\times \R^d} \Bigl( \frac{\ud\rho(X_T,V_T)}{\ud\barmu}\Bigr)^2 \ud \barmu(x, v) -1 \\ 
    & = \int_{\R^d}\Bigl(\frac{\ud \rho(X_T)}{\ud \mu}\Bigr)^2 \Bigl(\int_{\R^d} \Bigl(\frac{\ud \rho(V_T \mid X_T)}{\ud \nu(v)} \Bigr)^2\ud \nu(v) \Bigr)\ud \mu(x)-1 \\ 
    & = \int_{\R^d} \Bigl( \frac{\ud \rho(X_T)}{\ud \mu}\Bigr)^2 \Bigl(1+\chi^2\bigl(\rho(V_T\mid X_T) \,\Vert\, \nu \bigr)\Bigr)\ud \mu(x)-1 \\ & \ge \int_{\R^d} \Bigl(\frac{\ud \rho(X_T)}{\ud \mu} \Bigr)^2 \ud \mu(x)-1=\chi^2(\rho(X_T)\,\Vert\,\mu).
\end{align*}
Moreover, we would take initial condition in the form of
\begin{equation}\label{eqn:ic}
    (X_0,V_0)\sim \barmu_0(x,v)= \mu_0(x)\nu(v),
\end{equation}
which implies that $\chi^2(\barmu_0 \, \Vert\, \barmu) = \chi^2(\mu_0 \, \Vert\, \mu)$. 
Therefore, we get
\begin{equation}\label{eqn:expcvg2}
   \chi^2(\rho(X_T) \,\Vert\, \mu) \le K\exp \bigl(-\frac{m}{K\sqrt{L}}T\bigr)\chi^2(\mu_0 \,\Vert\, \mu),
\end{equation}
which suggests the total time $T$ needed to achieve control of chi-square divergence. 

\smallskip

Of course, in practice, we cannot simulate the zigzag process directly, as simulating the Poisson process associated with the bouncing event would require integrating $(V_t^{(j)} \partial_{x_j} U(X_t))_+$ along the trajectory. 
To turn the zigzag process into an efficient and practical sampling algorithm, the Poisson process for the bouncing events are usually simulated using the Poisson thinning trick (see e.g., discussions in \cite{bierkens2019zig}*{Section 3}). Under Assumption \ref{assump:condU}, we will use the following upper bound estimate for the rate:
\begin{equation}\label{eqn:zzupbd}
    (v_i\partial_{x_i} U(x+vt))_+\le \lvert v_i\partial_{x_i} U(x+vt) \rvert \le \lvert v_i \rvert \lvert \partial_{x_i} U(x+vt)  \rvert \le L \lvert v_i \rvert(\lvert x\rvert +t\lvert v\rvert).
\end{equation}
This upper bound has the advantage of not involving evaluations of $U$ and its partial derivatives, which greatly reduces the computational cost, compared with using numerical quadrature for $d$ Poisson clocks. 
The price to pay is the increased frequency of potential bouncing events, which scales like $O(\sqrt{d})$ since the pessimistic bound for the partial derivative $|\partial_{x_i} U(x)| \le |\nabla U(x)| \le L|x|$ typically sacrifices a factor of $O(\sqrt{d})$ in the first inequality.

\smallskip

Following the above discussions, the zigzag sampling algorithm is described in Algorithm~\ref{alg:zigzag}, where Step~\ref{state:magrate} uses the upper bound estimate in \eqref{eqn:zzupbd}, while Steps~\ref{state:thinning1}--\ref{state:thinning2} correspond to the Poisson thinning step. Note that for each potential bouncing event, the algorithm requires one evaluation of $\partial_{x_i} U$ in Step~\ref{state:thinning1}. In practice, typically accessing the partial derivatives of $U$ is the most time consuming step, therefore, in our complexity analysis, we focus on the number of access to partial derivatives. 

\begin{algorithm}[ht]
\caption{The zigzag sampling algorithm~\label{alg:zigzag}}
\flushleft
\hspace*{\algorithmicindent} \textbf{Input:} Terminal time $T$, initial distribution $\mu_0$.

\begin{algorithmic}[1]
\State Draw $x\sim \mu_0$.
\State Set $t \gets 0$.
\State Set $\refr \gets\textrm{true}$.
\While{$t<T$} 
 \If{$\refr$}
\State Draw $v\sim \mathcal{N}(0,\mathrm{Id})$.
   \State Draw $t_{\refr} \sim \text{Exp}\,(\sqrt{L})$.
   \State $t_{\refr} \gets \min\{t_\refr, T-t\}$.
   \State $\refr \gets \textrm{false}$.
\EndIf
\For{$i=1,\cdots,d$}
   \State \label{state:magrate} Draw $\tau_i$ such that $
   \pp(\tau_i \ge s) =\exp\bigl(-sL \lvert v_i\rvert\lvert x \rvert -\frac{s^2}{2}\lvert v_i\rvert\lvert v\rvert \bigr)$.
\EndFor
\State Pick $j=\arg \displaystyle\min_{i=1,\cdots,d} \tau_i$.
\State $\Lambda_j \gets L \lvert v_j\rvert(\lvert x\rvert +\tau_j\lvert v\rvert)$.
\State $t\gets t+\min\{\tau_j, t_{\refr}\}$.
\State $x\gets x+v\min\{\tau_j, t_{\refr}\}$.
\If{$\tau_j<t_{\refr}$}
\State \label{state:thinning1} $\lambda_j \gets (v_j\partial_{x_j} U(x))_+$.
\State Draw $\alpha\sim \mathrm{Unif}(0,1)$;
\If{$\alpha< \frac{\lambda_j}{\Lambda_j}$} 
 \State  $v_j \gets -v_j$.
\EndIf \label{state:thinning2}
\State  $t_{\refr} \gets t_{\refr}-\tau_j$. 
\Else
\State $\refr\gets \mathrm{true}$.
\EndIf
\EndWhile
\State  \Return $x$.
\end{algorithmic}
\end{algorithm}

\smallskip 

We also need the following assumption for technical purposes, as will be discussed after stating the main results:
\begin{assumption}\label{assump:shortT}
The initial distribution $\mu_0(x)$ satisfies a \emph{warm-start} condition:  \begin{equation}\label{eqn:smallDelta0}
    \chi^2(\mu_0\,\Vert\, \mu) \le \exp\Bigl(\frac{d}{8K\kappa \log d}\Bigr),
\end{equation} where $\kappa:=L/m$ is the condition number, and $K$ is the same universal constant as in  \eqref{eqn:expcvg}. 
Furthermore, the initial distribution is concentrated in the sense of
\begin{equation}\label{eqn:smallx0}
    \eta := \pp_{\mu_0} \Bigl(|x|>\sqrt{\frac{2d}{m}}\Bigr) < \frac{1}{4}.
\end{equation} 
\end{assumption}
\begin{remark}
The concentration condition \eqref{eqn:smallx0} can be easily satisfied. By Gaussian Annulus Theorem, if we pick $\mu_0 =\mathcal{N}(0,\frac{1}{m}\mathrm{Id})$, then $\pp_{\mu_0} \bigl(|x|>\sqrt{\frac{2d}{m}}\bigr) \le 3e^{-cd}$ for some universal constant $c$. The failure probability gets smaller if we take $\mu_0 =\mathcal{N}(0,\frac{1}{L}\mathrm{Id})$ or $\mu_0=\mu$. The warm start condition \eqref{eqn:smallDelta0} is more stringent but can be achieved by first running Langevin Monte Carlo (LMC). We will discuss that after presenting our main result.
\end{remark}

\subsection{Main Results}

\begin{theorem}\label{thm:mainthm} 
Under Assumption \ref{assump:condU}, for any prescribed accuracy $\varepsilon>0$, Algorithm~\ref{alg:zigzag} 
outputs a random variable $X$ such that \begin{equation}\label{eqn:mainresult}
    \chi^2(\rho(X) \,\Vert\, \mu)\le \varepsilon, 
\end{equation}
for terminal time $T$ chosen as 
\begin{equation}\label{eqn:choiceT}
    T = K \Bigl(\frac{\sqrt{L}}{m} \bigl(\log \frac{1}{\varepsilon}+\log \chi^2(\mu_0 \,\Vert\, \mu)+\log K\bigr)\Bigr),
\end{equation} 
where $K$ is the universal constant in \eqref{eqn:expcvg}.

Moreover, if $\varepsilon \ge \exp\bigl(-\frac{d}{8K\kappa \log d}\bigr),$ then, under Assumption \ref{assump:shortT}, with probability $1-\frac{C}{\sqrt{L}T}-C\log^{-\frac{3}{2}}d -\eta$, Algorithm \ref{alg:zigzag} returns an output with a computational cost of
\begin{equation*}
O\Bigl(d^{\frac{3}{2}}\kappa^2 \bigl(\log^\frac{3}{2}\frac{1}{\varepsilon}+\log^\frac{3}{2}\chi^2(\mu_0 \,\Vert\, \mu)\bigr)\Bigr)
\end{equation*} 
evaluations of partial derivatives of $U$, where $\eta$ is defined in \eqref{eqn:smallx0} and $C$ is a universal constant. 
\end{theorem}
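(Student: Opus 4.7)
The convergence claim (\ref{eqn:mainresult}) is immediate from Proposition~\ref{prop:choiceT}: substituting the prescribed $T$ from (\ref{eqn:choiceT}) into (\ref{eqn:expcvg2}) and using that the initial condition (\ref{eqn:ic}) satisfies $\chi^2(\barmu_0\,\Vert\,\barmu)=\chi^2(\mu_0\,\Vert\,\mu)$ gives $\chi^2(\rho(X_T)\,\Vert\,\mu)\le\varepsilon$. The remainder of the argument concerns the computational cost.

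The cost is first decomposed along iterations of the while loop in Algorithm~\ref{alg:zigzag}: each iteration performs at most one evaluation of a partial derivative of $U$ (Step~\ref{state:thinning1}, reached only on bouncing-candidate iterations). Writing $N_{\refr}$ and $N_{\text{bounce}}$ for the counts of refresh events and Poisson-thinning bouncing candidates in $[0,T]$, the total number of partial derivative evaluations is at most $N_{\text{bounce}}$. Since refreshes form a Poisson process with rate $\sqrt{L}$, $\E[N_\refr]=\sqrt{L}T$, and a Markov inequality extracts the $C/(\sqrt{L}T)$ term in the failure probability; this count also controls the supremum of the time $s_t$ since the last event, which enters below.

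The core estimate is for $N_{\text{bounce}}$. By (\ref{eqn:zzupbd}), the total candidate intensity at time $t$ is $L|V_t|_1(|X^\star|+s|V_t|)$, where $X^\star$ is the position at the most recent event and $s$ is the elapsed time since then; combining with $|X^\star|\le|X_t|+s|V_t|$ yields
\begin{equation*}
\E[N_{\text{bounce}}] \lesssim \E\!\left[\int_0^T L|V_t|_1\bigl(|X_t|+s_t|V_t|\bigr)\,\ud t\right].
\end{equation*}
To evaluate the integrand at time $t$, one transfers from the law $\rho_t$ to $\barmu$ via Cauchy--Schwarz,
\begin{equation*}
\E_{\rho_t}[f]\le\sqrt{(1+\chi^2(\rho_t\,\Vert\,\barmu))\,\E_{\barmu}[f^2]},
\end{equation*}
combined with the exponential $\chi^2$ decay from Proposition~\ref{prop:choiceT}. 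Under the product measure $\barmu$, standard Gaussian and log-concave moment bounds give $\E_{\barmu}[|V|_1^2|X|^2]\lesssim d^3/m$ and $\E_{\barmu}[|V|_1^2|V|^2]\lesssim d^3$, while the warm-start ceiling (\ref{eqn:smallDelta0}) together with the $\chi^2$ decay keeps $1+\chi^2(\rho_t\,\Vert\,\barmu)$ manageable throughout $[0,T]$. This is precisely why the complexity claim requires $\varepsilon\ge\exp(-d/(8K\kappa\log d))$: $T$ must be large enough that the exponential decay drives the worst-case $\chi^2$ prefactor down to a polynomial scale before being integrated in time.

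Passing from expectation to high probability via Markov's inequality at a threshold matching the advertised cost contributes the $C\log^{-3/2}d$ term (the exponent $3/2$ matches the $\log^{3/2}(1/\varepsilon)$ in the final count), and the concentration hypothesis (\ref{eqn:smallx0}) accounts for the $\eta$ term, which is needed because on $\{|X_0|>\sqrt{2d/m}\}$ the intensity bound for the first segment is too loose. Substituting $T$ from (\ref{eqn:choiceT}) into the expectation bound and a union bound over these three failure events produces the claimed cost $O\bigl(d^{3/2}\kappa^2(\log^{3/2}(1/\varepsilon)+\log^{3/2}\chi^2(\mu_0\,\Vert\,\mu))\bigr)$ and success probability. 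The main technical obstacle is the tension in the Cauchy--Schwarz step: the prefactor $\sqrt{1+\chi^2(\rho_t\,\Vert\,\barmu)}$ can be as large as $\exp(d/(16K\kappa\log d))$ at $t=0$, and the warm-start radius in (\ref{eqn:smallDelta0}) must be calibrated so that, after time-integration against the Gaussian moments of $\barmu$ and the Markov threshold loss, the final bound retains only polynomial dependence on $\kappa$ and $\log(1/\varepsilon)$.
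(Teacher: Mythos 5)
The student's argument for the convergence claim \eqref{eqn:mainresult} matches the paper. However, the central idea for the complexity bound does not work, and the student in fact flags the fatal difficulty in the last paragraph without resolving it.

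The proposal bounds the expected number of proposed bounces by integrating the thinning intensity in time and transferring each time-slice expectation from $\rho_t$ to $\barmu$ via Cauchy--Schwarz, paying a factor $\sqrt{1+\chi^2(\rho_t\,\Vert\,\barmu)}$. Under the warm-start ceiling this factor is as large as $\exp\bigl(d/(16K\kappa\log d)\bigr)$ at $t=0$, and Proposition~\ref{prop:choiceT} only gives it an exponential decay at rate $m/(2K\sqrt{L})$. Consequently $\int_0^T\sqrt{1+\chi^2(\rho_t\,\Vert\,\barmu)}\,\ud t \lesssim T + \tfrac{K\sqrt L}{m}\exp\bigl(d/(16K\kappa\log d)\bigr)$, which is exponentially large in $d/(\kappa\log d)$. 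Multiplying by the polynomial $\barmu$-moments of $|V|_1(|X|+|V|)$ cannot repair this: the final bound is off by an exponential factor, not a polynomial one, and no re-calibration of the warm-start radius within the allowed range can reconcile the required $\varepsilon$-regime with a polynomial final cost. There is no ``Markov threshold loss'' that absorbs such a factor; the $\log^{-3/2}d$ and $\eta$ terms in the statement do not arise from the structure you propose, and the way you account for them (as Markov-inequality slack and as a remedy for a loose intensity at $t=0$) is not consistent with an unconditional expectation bound, since $X_0$'s law is already captured by the $\rho_0$-to-$\barmu$ transfer.

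The paper avoids this entirely by arguing pathwise rather than in expectation. Lemma~\ref{lem:supuxt} establishes a deterministic bound $\sup_{t\in[0,T]}U(X_t)\le C\sqrt{L}Td$ on the event that a handful of explicit, high-probability conditions \eqref{eq:condevents} on $N$, the refreshed velocities $V_{T_k}$, $U(X_0)$, and $\sum t_k^2$ all hold. The conditions on the velocities use the Gaussian tail (giving the $C\log^{-3/2}d$ term), the one on $X_0$ uses \eqref{eqn:smallx0} (giving the $\eta$ term), and the remaining ones use Chernoff/Chebyshev on Poisson quantities (giving the $C/(\sqrt{L}T)$ terms; Lemma~\ref{lem:cltsqt} handles $\sum t_k^2$). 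With $\sup|X_t|$ bounded deterministically on that event, the proposal rate becomes a deterministic upper bound $A+2Bs$, and a renewal-process/Chebyshev argument (not Cauchy--Schwarz against $\barmu$) controls the number of proposed bounces. This is the key idea you are missing: you must show $|X_t|$ stays small along typical trajectories, not just in $\barmu$-average, because the early-time distribution is too far from $\barmu$ for an averaging argument to be useful.
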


\begin{remark}
By repeated trials, the theorem implies that for any $\delta\in (0,\frac{1}{4})$, with probability $1-\delta$, Algorithm \ref{alg:zigzag} returns the desired output with a computational cost of \[O\biggl(d^{\frac{3}{2}}\kappa^2\Bigl(\log^\frac{3}{2}\frac{1}{\varepsilon}+\log^\frac{3}{2}\chi^2(\mu_0\,\Vert\,\mu)\Bigr)\log \frac{1}{\delta}\,\Bigl\lvert\log^{-1}\bigl(\frac{1}{\sqrt{L}T}+\log^{-\frac{3}{2}}d+\eta\bigr)\Bigr\rvert\biggr),\]
that is $\widetilde{O}(d^{\frac{3}{2}}\kappa^{2})$
evaluations of partial derivatives of $U$, where $\widetilde{O}(\cdot)$ hides logarithmic factors. 

With the common computational model that $d$ evaluations of partial derivatives of $U$ is equivalent to one evaluation of $\nabla U$ in complexity, the complexity of zigzag is equivalent to $\widetilde{O}(d^{\frac{1}{2}}\kappa^2)$ evaluations of $\nabla U$. 
\end{remark}

Let us explain the choice of $T$ in \eqref{eqn:choiceT}: 
For the zigzag sampling algorithm to reach the target  $\varepsilon$ accuracy according to \eqref{eqn:expcvg2}, the terminal time $T$ needs to be large enough. Meanwhile, the Assumption~\ref{assump:shortT} guarantees that $T$ is not too large, as otherwise we cannot effectively control the number of bouncing events either due to a very large $V$ drawn from a velocity refreshing event or the trajectory reaching regions with large gradient. These motivate our previous Assumption \ref{assump:shortT} on the initial distribution $\mu_0$, as well as the restriction on $\varepsilon$ that it cannot be too small compared to $d$.
We remark that the assumption on $\varepsilon$ is not prohibitive as we are  interested in high dimensional cases and the error threshold is exponentially small in $d$. 


\smallskip

The warm start condition \eqref{eqn:smallDelta0} can be achieved if we start with a Gaussian distribution in $x$ and run Langevin Monte Carlo \begin{equation}\label{eqn:langevinmc}
    X_{n+1} = X_n-h\nabla U(X_n) + \sqrt{2h}\, \xi_n
\end{equation} where $h$ is the step size, and $\xi_n$ are i.i.d. $\mathcal{N}(0,\mathrm{Id})$ random variables. This leads to the following corollary:
\begin{corollary}\label{cor:lmczzfeasiblest}
Let $d\gg 1$. Suppose the potential $U$ satisfies Assumption \ref{assump:condU} for some $\kappa\ge 1$ such that $\kappa^{\frac{9}{5}} \le \frac{d^\frac{4}{5}}{C\log^3 d}$ for some computable (from \cite{erdogdu2020convergence}) universal constant $C$. Then, for any prescribed accuracy $\varepsilon>0$, if we initialize $X_0 \sim \mathcal{N} (0, \frac{1}{2L}\mathrm{Id})$, the hybrid algorithm by first running LMC \eqref{eqn:langevinmc} for $N= d^{4/5}\kappa^{16/5} $ steps with step size $h=\frac{4}{5}d^{-4/5}\kappa^{-16/5}m^{-1} \log \frac{d}{\kappa}$ and then Algorithm \ref{alg:zigzag} up to time $T=  K \Bigl(\frac{\sqrt{L}}{m} \bigl(\log \frac{1}{\varepsilon}+d^\frac{1}{5}\kappa^\frac{4}{5}\log^2 \frac{d}{\kappa}+\log K\bigr)\Bigr) $ outputs a random variable $X$ such that \begin{equation}\label{eqn:mainresultlmc}
    \chi^2(\rho(X) \,\Vert\, \mu)\le \varepsilon.
\end{equation}
Moreover, if $\varepsilon \ge \exp\bigl(-\frac{d}{8K\kappa \log d}\bigr),$ with probability $1-\frac{C}{\sqrt{L}T}-C\log^{-\frac{3}{2}}d -C \exp(Cd^\frac{1}{5}\kappa^\frac{4}{5}\log^2 \frac{d}{\kappa}-cd)$, Algorithm \ref{alg:zigzag} returns an output with a computational cost of
\begin{equation*}
O\Bigl(d^{\frac{1}{2}}\kappa^2 \log^\frac{3}{2}\frac{1}{\varepsilon}+d^\frac{4}{5}\kappa^\frac{16}{5}\log^3 \frac{d}{\kappa}\Bigr)
\end{equation*} 
evaluations of partial derivatives of $U$.
\end{corollary}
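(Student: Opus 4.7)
The plan is to use the specified Langevin Monte Carlo (LMC) run as a warm-start generator, verify Assumption~\ref{assump:shortT} for its output, and then invoke Theorem~\ref{thm:mainthm}. Let $\mu_N^{\mathrm{LMC}}$ denote the distribution of the LMC iterate after $N = d^{4/5}\kappa^{16/5}$ steps initialized at $\mathcal{N}(0, \frac{1}{2L}\mathrm{Id})$, with the step size $h = \frac{4}{5}d^{-4/5}\kappa^{-16/5}m^{-1}\log(d/\kappa)$.

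The first and most technical task is to show $\chi^2(\mu_N^{\mathrm{LMC}} \Vert \mu) \le \exp(C d^{1/5}\kappa^{4/5}\log^2(d/\kappa))$ for some universal $C$. I would obtain this by invoking the LMC convergence analysis in chi-squared divergence from \cite{erdogdu2020convergence}: the parameter choice satisfies $Nhm = \frac{4}{5}\log(d/\kappa)$, so the contraction factor $e^{-Nhm} = (d/\kappa)^{-4/5}$ combines with the accumulated discretization bias to produce the stated bound. The unusual exponents $4/5$ and $16/5$ are chosen precisely to balance the LMC cost $N$ against the zigzag warm-start cost term $d^{1/2}\kappa^2 \log^{3/2}\chi^2(\mu_N^{\mathrm{LMC}} \Vert \mu)$ arising from Theorem~\ref{thm:mainthm}. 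The hypothesis $\kappa^{9/5} \le d^{4/5}/(C\log^3 d)$ then guarantees $C d^{1/5}\kappa^{4/5}\log^2(d/\kappa) \le d/(8K\kappa\log d)$, so the warm-start condition~\eqref{eqn:smallDelta0} holds.

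To verify the concentration condition~\eqref{eqn:smallx0}, I would apply Cauchy--Schwarz: for any event $A$,
\[
\pp_{\mu_N^{\mathrm{LMC}}}(A) \le \sqrt{1 + \chi^2(\mu_N^{\mathrm{LMC}} \Vert \mu)} \cdot \sqrt{\mu(A)}.
\]
Taking $A = \{|x|>\sqrt{2d/m}\}$ and using the standard Gaussian-type concentration inequality $\mu(|x|>\sqrt{2d/m}) \le C e^{-cd}$ implied by $m$-strong convexity of $U$ at its minimizer $0$, combined with the chi-squared bound above, yields $\eta \le C\exp(Cd^{1/5}\kappa^{4/5}\log^2(d/\kappa) - cd)$, matching the probability term claimed in the statement. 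The hypothesis on $\kappa$ makes this exponent strictly negative, so $\eta < 1/4$ for $d$ sufficiently large.

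Applying Theorem~\ref{thm:mainthm} with $\mu_0 = \mu_N^{\mathrm{LMC}}$, the required terminal time becomes $T = K\frac{\sqrt{L}}{m}(\log(1/\varepsilon) + \log\chi^2(\mu_N^{\mathrm{LMC}}\Vert\mu) + \log K)$, bounded by the expression in the statement. Substituting the chi-squared bound into the zigzag cost $O(d^{3/2}\kappa^2 (\log^{3/2}(1/\varepsilon) + \log^{3/2}\chi^2(\mu_N^{\mathrm{LMC}}\Vert\mu)))$ partial derivative evaluations and accounting for $d$ partial derivatives per gradient evaluation yields $O(d^{1/2}\kappa^2 \log^{3/2}(1/\varepsilon) + d^{4/5}\kappa^{16/5}\log^3(d/\kappa))$ for the zigzag phase; the LMC cost of $N$ gradient evaluations is dominated by the second term, so the total matches the claim. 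The main obstacle is the first step: extracting from \cite{erdogdu2020convergence} the precise chi-squared bound at this particular balanced parameter choice, and verifying that the contraction and bias exponents combine to yield exactly the $d^{1/5}\kappa^{4/5}\log^2(d/\kappa)$ prefactor in the warm-start bound.
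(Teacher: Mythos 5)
Your proposal follows essentially the same route as the paper: invoke the $\chi^2$ convergence bounds from \cite{erdogdu2020convergence} to verify that $\chi^2(\rho(X_N)\Vert\mu)\le \exp(Cd^{1/5}\kappa^{4/5}\log^2(d/\kappa))$, check the concentration condition \eqref{eqn:smallx0} by the Cauchy--Schwarz/$\chi^2$ argument, and apply Theorem~\ref{thm:mainthm} with $\mu_0=\rho(X_N)$, then add the LMC cost $N$ to the zigzag cost. The one place where the paper is more explicit than your sketch (and which you correctly flag as the remaining obstacle) is in citing the specific lemmas of \cite{erdogdu2020convergence} yielding the bound $\chi^2(\rho(X_N)\Vert\mu)\le \exp\bigl(Cd\exp(-Nhm)+CNh^2\kappa^2L^2(d+\log N)\bigr)$, and in verifying the side conditions $h\le \frac{m}{4L^2}$, $Nh^2\le \frac{1}{196c\kappa^2 L^2}$ needed to apply those lemmas; plugging the stated $N,h$ into that formula gives exactly the $d^{1/5}\kappa^{4/5}\log^2(d/\kappa)$ exponent you anticipated.
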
\begin{proof}
It is easy to verify that our choice of $N,h$ satisfies $h\le \frac{m}{4L^2}$ and $Nh^2 \le \frac{1}{196c\kappa^2 L^2}$ (where $c$ satisfies \cite{erdogdu2020convergence}*{Lemma 14}). Therefore, we may appeal Lemmas 2, 14, 25, 26 of \cite{erdogdu2020convergence}, so that the random variable $X_N$ produced in \eqref{eqn:langevinmc} satisfies \begin{equation} \label{eqn:lmcinitialization}
    \chi^2(\rho(X_N) \| \mu) \le \exp\Bigl(Cd\exp(-Nhm)+CN h^2 \kappa^2 L^2(d+\log N) \Bigr) = \exp\Bigl(Cd^\frac{1}{5}\kappa^\frac{4}{5}\log^2 \frac{d}{\kappa}\Bigr)
\end{equation} for some universal constant $C$. This, combined with our assumption on $\kappa$, guarantees that \eqref{eqn:smallDelta0} holds with $\rho(X_N)$ playing the role of $\mu_0$. We can also check the validity of \eqref{eqn:smallx0} by \begin{align*}
    \pp_{\rho(X_N)}\bigl(|x|>\sqrt{\frac{2d}{m}}\bigr) \le \Bigl(1+\chi^2(\rho(X_N)\|\mu)\Bigr)^\frac{1}{2} \Bigl(\pp_{\mu}\bigl(|x|>\sqrt{\frac{2d}{m}}\bigr) \Bigr)^\frac{1}{2}\le C \exp\Bigl(Cd^\frac{1}{5}\kappa^\frac{4}{5}\log^2 \frac{d}{\kappa}-cd\Bigr) \ll 1.
\end{align*} Therefore we may apply Theorem \ref{thm:mainthm} with $\mu_0=\rho(X_N)$, and derive that the total computational cost (in terms of number of evaluations of $\nabla U$) equals to \[O\Bigl(N+d^\frac{1}{2}\kappa^2\bigl(\log^\frac{3}{2}\frac{1}{\varepsilon}+ \log^\frac{3}{2}\chi^2(\rho(X_N)\|\mu)\bigr)\Bigr)= O\Bigl(d^\frac{4}{5}\kappa^\frac{16}{5}\log^3 \frac{d}{\kappa}+d^\frac{1}{2}\kappa^2 \log^\frac{3}{2}\frac{1}{\varepsilon} \Bigr).\]
\end{proof}
\smallskip

Theorem~\ref{thm:mainthm} guarantees that the zigzag sampling algorithm (Algorithm \ref{alg:zigzag}) outputs a sample from a distribution with $\chi^2$-divergence at most $\varepsilon$ away from the target density for a computational complexity equivalent to $\widetilde{O}(d^\frac{3}{2}\kappa^2)$ partial derivative evaluations (i.e., amounts to $\widetilde{O}(d^\frac{1}{2}\kappa^2)$ gradient evaluations), in the regime $\max\{\kappa, \log \frac{1}{\varepsilon}\} \ll \frac{d}{\log d}$ with a warm-start condition. Corollary \ref{cor:lmczzfeasiblest} establishes that the hybrid LMC-zigzag algorithm outputs a sample for a computational complexity $\widetilde{O}(d^\frac{4}{5}\kappa^\frac{16}{5})$ gradient evaluations. The initialization using LMC is added only for technical reasons as we currently do not have complexity guarantees otherwise with an explicit initial distribution, nor is it necessary for actual implementations. We would also like to comment that our goal is to obtain the best possible scaling in $d$, and the scaling in $\kappa$ might be possibly improved by a more careful analysis.

\smallskip

Our analysis is based on the quantitative convergence rate of the zigzag process established in \cite{lu2020explicit}, which is  $O(\frac{m}{\sqrt{L}})$ for $m$-convex and $L$-smooth potentials. The rest of our proof is based on estimating $\sup |X_t|$ along a single trajectory of the zigzag process and subsequently turn this into an estimate on the number of potential bouncing events, and hence number of partial derivative evaluations. Our analysis utilizes the two important and desirable features of the zigzag sampling process: \begin{itemize}[wide]
    \item The implementation of the zigzag process does not need time discretization, as the velocity in deterministic portion of the trajectory remains constant, which makes it possible to simulate the exact trajectories of the zigzag process while eliminating an important source of error. This is the reason that the complexity of the zigzag process only has logarithmic dependence on $\frac{1}{\varepsilon}$, without Metropolis acceptance/rejection. 
    \item Moreover, for each potential bouncing event of zigzag, only one evaluation of a \emph{partial derivative} of the potential is required, which is $O(d)$ cheaper than a full gradient evaluation in computational cost for usual model of computation.
\end{itemize} 

\smallskip

We would also remark that we quantify the error of distribution in terms of $\chi^2$-divergence, which provides stronger guarantee than total variation, KL divergence or $2$-Wasserstein distance. While $\chi^2$-divergence is relatively convenient for obtaining convergence rates of continuous processes based on Poincar\`e inequality \cites{cao2019explicit, lu2020explicit}, it does not seem easy to use for analyzing discretization error of SDEs. The work \cite{vempala2019rapid} made assumptions of Poincar\`e inequality for the discrete invariant measure, which is difficult to verify. We are fortunate to avoid such problem for zigzag sampler, thanks to the fact that zigzag does not need time discretization. After the first version of this work appears online, \cite{erdogdu2020convergence} established convergence of LMC in $\chi^2$- and R\'enyi divergence, using the exponential convergence of continuous time overdamped Langevin dynamics in R\'enyi divergence \cites{cao2019exponential,vempala2019rapid}.

\subsection{Previous Works} 

Here we focus on results on non-asymptotic analysis of sampling algorithms, which has been a focused research area in recent years. Many sampling algorithms have been analyzed including algorithms based on overdamped Langevin dynamics \cites{dalalyan2017theoretical,durmus2019high,durmus2019analysis,vempala2019rapid,li2019stochastic,ding2020randomlangevin}, underdamped Langevin dynamics \cites{cheng2018underdamped,dalalyan2020sampling,ma2019there,shen2019randomized,ding2020randomunderdamped,monmarche2020high}, Hamiltonian Monte Carlo \cites{mangoubi2017rapid,lee2018algorithmic,chen2019optimal,mangoubi2018dimensionally,bou2020coupling}, or high order Langevin dynamics \cite{mou2019high}, among others. These methods involve discretization of ODEs or SDEs, which yields an error that scales polynomially with step size. Thus the complexity of these algorithms has polynomial dependence on $\varepsilon^{-1}$, where $\varepsilon$ is the desired accuracy threshold. 

\smallskip

Metropolized variants of sampling algorithms, including Metropolized HMC and Metropolis Adjusted Langevin Algorithm (MALA), have also been studied in \cites{dwivedi2018log,chen2020fast, lee2020logsmooth}, the complexities of which have only logarithmic dependence on $\varepsilon^{-1}$, similar to the zigzag sampling process analyzed here. In \cite{dwivedi2018log} the complexity upper bound for MALA is established as $\widetilde{O}(\kappa d+\kappa^\frac{3}{2}d^\frac{1}{2})$ under warm start condition, and $\widetilde{O}(\kappa d^2+\kappa^\frac{3}{2}d^\frac{3}{2})$ with a feasible start. In \cite{chen2020fast} the complexity upper bound for MALA is improved to $\widetilde{O}(\kappa d+\kappa^\frac{3}{2}d^\frac{1}{2})$ with feasible start (where $\mu_0 = \mathcal{N}(0,\frac{1}{L}\mathrm{Id})$). The work \cite{chen2020fast} also established bounds for Metropolized HMC, which is $\widetilde{O}(\kappa d^\frac{11}{12})$ with warm start (which is in fact more stringent than our Assumption~\ref{assump:shortT}) in the regime $\kappa = O(d^\frac{2}{3})$, and $\widetilde{O}(\kappa^\frac{3}{4}d+\kappa^\frac{7}{4}d^\frac{1}{2})$ with feasible start if the target potential function has a bounded Hessian. The complexity upper bound has been improved in~\cite{lee2020logsmooth} to  $\widetilde{O}(\kappa d)$ for both Metropolized HMC and MALA with a feasible start, based on a refined analysis using concentration of gradient norm. In comparison, our result for zigzag relies on a warm start (which is achievable by LMC), while the complexity upper bound has better dependence in $d$. The issue of feasible start will be further discussed in Section~\ref{sec:disc}.

\smallskip

Regarding asymptotic analysis for the convergence of zigzag process, the ergodicity was first established in \cite{bierkens2019ergodicity}. Exponential convergence of the zigzag process is established in \cites{fontbona2016long, bierkens2017piecewise} using a Lyapunov function argument. A central limit theorem of the zigzag process is established in \cite{bierkens2017limit}, and a large deviation principle is established for the empirical measure in \cite{bierkens2019large}. The spectrum of the zigzag process has been studied in \cites{bierkens2019spectral, guillin2020low}. A dimension independent exponential convergence rate for the zigzag process is established in \cite{andrieu2018hypocoercivity}, using the hypocoercivity framework developed in \cite{dolbeault2015hypocoercivity}. Finally, a more quantitative convergence estimate was established in \cite{lu2020explicit}, for which our analysis of the sampling algorithm is based on. 

\section{Strategy of the Proof}
Since Algorithm \ref{alg:zigzag} always simulates exact trajectories of the zigzag process, we see that \eqref{eqn:mainresult} is guaranteed with the correct choice of $T$. Therefore we only need to estimate the computational complexity. The strategy of the proof is to first give an estimate on $\sup_{t\in [0,T]} U(X_t)$ (Lemma \ref{lem:supuxt}), which directly controls $\sup_{t\in [0,T]} |X_t|$. The upper bound on $|X_t|$ in turn provides us an estimate of upper bound on the number of partial derivative evaluations of $U$. The complexity upper bound we derive holds with high probability, while it does not always hold (for example, the number of proposed bouncing events from the Poisson clock might be atypically high), such events only occur with very small probability, which will be controlled in the proof. 

\smallskip

Let $N+1$ be the total number of velocity refreshments (including the initial refreshment), therefore $N$ is a Poisson random variable such that  \begin{equation}\label{eqn:poissonrv}
    \pp(N=n)=\dfrac{(\sqrt{L}T)^n}{n!}e^{-\sqrt{L}T}.
\end{equation} 
Let $0=T_0<T_1<T_2<\cdots<T_N \leq T < T_{N+1}$ be the refresh times, and $V_{T_k}$ be the velocity variable after refreshment at time $T_k$. For $k=1,\cdots,N$, we use $t_k=T_k-T_{k-1}$ to denote the time duration between refreshments. For convenience, we will also denote $t_{N+1}=T-T_N$.

\smallskip

The first step of the proof is the following lemma which controls $\sup_{t \in [0, T]} U(X_t)$ condition on some high probability events. The proof will be deferred to the appendix. 
\begin{lemma}\label{lem:supuxt}
Under Assumptions \ref{assump:condU} and \ref{assump:shortT}, suppose the following conditions hold:
\begin{subequations}\label{eq:condevents}
\begin{align}
    & \frac{1}{2}\sqrt{L}T \le N \le \frac{3}{2}\sqrt{L}T; \label{eqn:condN} \\
    & \lvert V_{T_k} \cdot \nabla U(X_{T_k}) \rvert \le \Bigl(\frac{d}{\sqrt{L}T}\Bigr)^{1/2} |\nabla U(X_{T_k})|, \quad \forall k = 1, \cdots,N; \label{eqn:condV1}\\
    & \lvert V_{T_k} \rvert \le 2\sqrt{d}, \quad \forall k = 1, \cdots,N \label{eqn:condV2} \\
    & U(X_0)\le \kappa d; \label{eqn:condX0} \\
    & \sum_{k=1}^{N+1} t_k^2\le \dfrac{4T}{\sqrt{L}}. \label{eqn:cltsqt}
\end{align} 
\end{subequations}
Then there exists a universal constant $C$ such that \begin{equation}\label{eqn:lm1det}
    \sup_{t\in [0,T]} U(X_t) \le C \sqrt{L}Td.
\end{equation}
\end{lemma}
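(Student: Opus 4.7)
The plan is to analyze $U(X_t)$ between consecutive refreshment times and exploit two structural features of the zigzag dynamics: (i) between two refreshments $|V_s|$ is constant (bouncing events only flip individual coordinates) and (ii) an accepted bouncing event causes the directional derivative $V_s\cdot\nabla U(X_s)$ to jump strictly downward, since a bounce on coordinate $j$ fires only when $V_s^{(j)}\partial_{x_j} U(X_s)>0$ and then reverses that sign. Writing $f(s):=V_s\cdot\nabla U(X_s)=\frac{\ud}{\ud s}U(X_s)$, on any free-flight sub-interval I would have $f'(s)=V_s^\top \nabla^2 U(X_s)V_s\le L|V_{T_k}|^2 \le 4Ld$ by \eqref{eqn:condU} and \eqref{eqn:condV2}, while the bouncing jumps are all non-positive. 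Hence on $[T_k,T_{k+1}]$,
\[
f(s)\le V_{T_k}\cdot\nabla U(X_{T_k})+4Ld(s-T_k),
\]
and integrating gives, for every $t\in[T_k,T_{k+1}]$,
\[
U(X_t)-U(X_{T_k})\le |V_{T_k}\cdot\nabla U(X_{T_k})|\,t_{k+1}+2Ld\,t_{k+1}^2.
\]

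For $k\ge 1$, I would combine \eqref{eqn:condV1} with the standard $L$-smoothness consequence $|\nabla U(x)|^2\le 2LU(x)$ (which follows from $U\ge 0$ and minimizing the smoothness upper bound) to upgrade the first term to $\sqrt{2\sqrt L d/T}\,\sqrt{U(X_{T_k})}\,t_{k+1}$. Setting $M^*:=\sup_{t\in[0,T]}U(X_t)$, using $U(X_{T_k})\le M^*$, $\sum_{k=0}^{N}t_{k+1}=T$, and \eqref{eqn:cltsqt} for the quadratic-in-time term, the accumulation of the per-interval bounds (plus an $O(\sqrt L Td)$ contribution from $k=0$ treated below) collapses to a scalar quadratic inequality of the form
\[
M^* \le U(X_0)+\sqrt{2\sqrt L dT}\,\sqrt{M^*}+C_1\sqrt L Td.
\]
Solving this inequality in $\sqrt{M^*}$ and absorbing $U(X_0)\le \kappa d$ from \eqref{eqn:condX0} — valid because $\kappa d\lesssim \sqrt L Td$ in the regime $T\sim \sqrt L/m=\kappa/\sqrt L$ under Theorem \ref{thm:mainthm} — delivers the claim \eqref{eqn:lm1det}.

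The initial interval $[0,T_1]$ needs separate handling since \eqref{eqn:condV1} is stated only for $k\ge 1$. There I would fall back on the Cauchy–Schwarz estimate $|V_0\cdot\nabla U(X_0)|\le |V_0|\sqrt{2LU(X_0)}\le 2\sqrt{2L\kappa}\,d$ and the deterministic consequence $t_1\le 2\sqrt{T/\sqrt L}$ of \eqref{eqn:cltsqt}, which produces a contribution of order $d\sqrt{\kappa\sqrt L T}=O(\sqrt L Td)$ because $\sqrt L T\gtrsim \kappa$ in the regime considered. The main obstacle worth flagging is that the naive pointwise bound $|V_{T_k}||\nabla U(X_{T_k})|\lesssim \sqrt{d}\sqrt{L\,U(X_{T_k})}$, in place of \eqref{eqn:condV1}, would yield only $M^*=O(LT^2d)=O(\kappa^2 Ld)$, coarser than the target by a factor $\sqrt L T$. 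The Gaussian near-orthogonality of a freshly refreshed $V_{T_k}$ to the fixed direction $\nabla U(X_{T_k})$ in high dimension — exactly the content of \eqref{eqn:condV1} — is the key ingredient that restores the sharp $\sqrt L Td$ scaling.
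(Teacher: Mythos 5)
Your proof is correct and rests on the same structural observations as the paper's: (i) between refreshments $|V_t|$ is constant and bouncing makes $\lambda(t):=V_t\cdot\nabla U(X_t)$ jump downward, so on a free-flight segment $\lambda(T_k+t)\le \lambda(T_k)+tL|V_{T_k}|^2$; (ii) co-coercivity $|\nabla U|^2\le 2LU$; and (iii) the near-orthogonality \eqref{eqn:condV1} is the ingredient that upgrades the naive $\sqrt{d}|\nabla U|$ bound to $\sqrt{d/(\sqrt L T)}\,|\nabla U|$ and hence restores the $\sqrt L T d$ scaling. Where you genuinely diverge from the paper is the aggregation step. The paper applies Young's inequality with a tuning parameter $\alpha$ to replace $t\sqrt{U(X_{T_k})}$ by $\alpha U(X_{T_k})+Cd\sqrt L t^2/\alpha$, yielding the multiplicative recursion $U(X_{T_{k+1}})\le (1+\alpha)U(X_{T_k})+CLd\,t_{k+1}^2$, and then iterates with $\alpha=1/(\sqrt L T)=O(1/N)$ so that $(1+\alpha)^{N+1}=O(1)$. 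You instead sum the raw per-interval increments, bound every intermediate $U(X_{T_k})$ by $M^*:=\sup_{[0,T]}U(X_t)$, and solve the single self-improving scalar inequality $M^*\le a+b\sqrt{M^*}$. Your route avoids choosing $\alpha$ and is arguably cleaner; it only additionally uses that $M^*$ is finite almost surely, which follows from continuity of the piecewise-linear trajectory on the compact interval $[0,T]$. Both routes deliver the same $O(\sqrt L T d)$ bound, using \eqref{eqn:cltsqt} for the $\sum t_k^2$ contribution and $\kappa\le\sqrt L T$ (true for the $T$ in \eqref{eqn:choiceT}) to absorb $U(X_0)\le\kappa d$.

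One small shared imprecision worth noting: the lemma states \eqref{eqn:condV1}--\eqref{eqn:condV2} only for $k=1,\dots,N$, yet both your argument and the paper's need control at $k=0$ (the paper's iteration starts at $U(X_{T_1})\le (1+\alpha)U(X_{T_0})+CLd\,t_1^2$, which invokes V1, V2 at $T_0$). You flag the $k=0$ case explicitly and handle it with Cauchy--Schwarz, $U(X_0)\le\kappa d$, $t_1\le 2\sqrt{T/\sqrt L}$, and $\kappa\lesssim\sqrt L T$; however, the step $|V_0|\sqrt{2LU(X_0)}\le 2\sqrt{2L\kappa}\,d$ still tacitly uses $|V_0|\le 2\sqrt d$, which is \eqref{eqn:condV2} at $k=0$. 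This is not a gap in your reasoning so much as an inherited indexing wrinkle in the hypotheses; the intended reading is clearly that the velocity bounds hold for all $N+1$ refreshments including the initial draw.
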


The next element in the proof is to control the failure event that \eqref{eq:condevents} does not hold. The control of the first four events are relatively straightforward and will thus be directly carried out in the proof of theorem below; we state the probability for the event \eqref{eqn:cltsqt} to hold as the following lemma, which will also be proved in the appendix. 
\begin{lemma}\label{lem:cltsqt}There exists a universal constant $C$ such that, if $\sqrt{L}T>C$, then with probability $1-\frac{2}{\sqrt{L}T}$, condition \eqref{eqn:cltsqt} holds.
\end{lemma}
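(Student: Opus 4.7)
The plan is to identify the refresh times $T_k$ with the arrival times of a rate-$\sqrt{L}$ Poisson process on $[0,T]$ and reduce the claim to a concentration estimate for a sum of squared iid exponentials. Let $\tau_1,\tau_2,\ldots$ be iid $\mathrm{Exp}(\sqrt{L})$ with $T_k = \sum_{i=1}^k \tau_i$; then $N+1 = \inf\{k:T_k > T\}$ is a stopping time, $t_k = \tau_k$ for $k \le N$, and $t_{N+1} = T - T_N \le \tau_{N+1}$, so
\begin{equation*}
\sum_{k=1}^{N+1} t_k^2 \le \sum_{k=1}^{N+1} \tau_k^2.
\end{equation*}

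First I would split the event via a union bound at a threshold $M$ of order $\sqrt{L}T$, to be tuned:
\begin{equation*}
\pp\Bigl(\sum_{k=1}^{N+1} \tau_k^2 > \tfrac{4T}{\sqrt{L}}\Bigr) \le \pp(N+1 > M) + \pp\Bigl(\sum_{k=1}^{M}\tau_k^2 > \tfrac{4T}{\sqrt{L}}\Bigr),
\end{equation*}
since on $\{N+1\le M\}$ the right-hand sum dominates. Each term I would then bound by Chebyshev's inequality. For the first, $N$ is Poisson with mean and variance $\sqrt{L}T$, so any choice $M = (1+\epsilon)\sqrt{L}T$ gives $\pp(N+1 > M) = O(1/(\sqrt{L}T))$. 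For the second, $\E[\tau^2] = 2/L$ and $\var(\tau^2) = 20/L^2$, so $\sum_{k=1}^M\tau_k^2$ has mean $2M/L$ and variance $20M/L^2$; picking $M$ so that $2M/L$ sits safely below $4T/\sqrt{L}$ (say $M = \lceil\tfrac{3}{2}\sqrt{L}T\rceil$, whose mean is $3T/\sqrt{L}$) leaves a deviation of order $T/\sqrt{L}$, and Chebyshev delivers a tail probability of order $1/(\sqrt{L}T)$.

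The main obstacle is constants: the mean $\approx 2T/\sqrt{L}$ is exactly half of the threshold $4T/\sqrt{L}$, so the room available to Chebyshev is only a factor of two, and a direct application produces a bound of the form $C/(\sqrt{L}T)$ with $C$ larger than $2$. The assumption $\sqrt{L}T > C$ lets one absorb lower-order terms from the optimization in $M$, and if the bare Chebyshev constant is still insufficient one can instead invoke a Bernstein-type tail bound for sums of sub-exponential variables: although $\tau_k^2$ itself is only sub-Weibull of order $1/2$, the deviation $T/\sqrt{L}$ falls in the sub-Gaussian regime of such a sum, yielding a failure probability that is exponentially small in $\sqrt{L}T$ and hence comfortably below $2/(\sqrt{L}T)$ once $\sqrt{L}T$ exceeds a universal constant.
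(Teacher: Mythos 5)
Your decomposition is sound and would yield the conclusion, but it is a genuinely different route from the paper's, and there is one imprecision worth fixing. The paper does not pass to i.i.d.\ exponentials plus a union bound at all: it conditions on $N$, uses that (given $N$) the refresh times $T_1,\ldots,T_N$ are order statistics of $N$ uniform points on $(0,T)$, and then computes $\E(\Xi\mid N)$ and $\E(\Xi^2\mid N)$ \emph{exactly} by induction, obtaining $\E\Xi \le 2T/\sqrt{L}$ and $\var\Xi \le 8T/L^{3/2}$; a single Chebyshev step then gives the constant $2$ on the nose. That exact calculation implicitly exploits the negative dependence of the simplex variables $(t_1,\ldots,t_{N+1})$ (which sum to $T$), which your relaxation $\sum t_k^2 \le \sum \tau_k^2$ throws away; this is precisely why, as you correctly diagnose, bare Chebyshev on your side produces a constant larger than $2$ ($\approx 30/(\sqrt{L}T)$ at $M=\tfrac{3}{2}\sqrt{L}T$, no better than $\approx 5/(\sqrt{L}T)$ even after optimizing $M$, before adding the Poisson tail).

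Your fallback to a sharper tail bound does close the gap in principle, but the final sentence is not quite right as stated. Since $\tau\sim\mathrm{Exp}(\sqrt{L})$ gives $\pp(\tau^2>t)=e^{-\sqrt{L}\sqrt{t}}$, the summands are sub-Weibull of order $1/2$ and have no finite exponential moment, so a standard Bernstein inequality does not apply; one needs a sub-Weibull concentration bound (e.g.\ of Kuchibhotla--Chakrabortty type). For your parameters $t=T/\sqrt{L}$, $M\asymp\sqrt{L}T$, the resulting exponent is $\min\bigl(c\,\sqrt{L}T,\,c'(\sqrt{L}T)^{1/2}\bigr)=c'(\sqrt{L}T)^{1/2}$ --- the sub-Weibull term, not the sub-Gaussian term, is the binding one, so the failure probability is stretched-exponentially small rather than ``sub-Gaussian''. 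That is still far below $2/(\sqrt{L}T)$ once $\sqrt{L}T$ exceeds a universal constant, so the lemma follows, but you should invoke the correct concentration result and identify the dominant regime properly. In summary: your approach trades the paper's exact-but-combinatorial moment computation (elementary Chebyshev, sharp constant) for a structurally simpler i.i.d.\ reduction that then requires heavier concentration machinery to recover the constant.
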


The final component of the proof is to turn the estimate for $\sup_{t \in [0, T]} U(X_t)$ to an upper bound for the number of proposed bouncing events. 

\begin{proof}[Proof of Theorem \ref{thm:mainthm}]
Let $p_i$ be the probability that condition $i$ in \eqref{eq:condevents} of Lemma~\ref{lem:supuxt} fails. We start with  condition \eqref{eqn:condN} of Lemma \ref{lem:supuxt}. For Poisson process with $t_i$ as the arrival times, we may estimate the first failure probability (here and for the rest of the proofs $C$ denotes a universal constant that may change from line to line) \begin{equation}
    p_a\le \exp(-\frac{1}{C}\sqrt{L}T) \le \frac{C}{\sqrt{L}T}.
\end{equation}
We now check the conditions \eqref{eqn:condV1} and \eqref{eqn:condV2} of Lemma \ref{lem:supuxt}. By Gaussian Annulus Theorem, for each refreshment, we have \begin{equation} \pp(\lvert V_{T_k}\rvert \ge 2\sqrt{d})\le 3e^{-cd},\end{equation} where $c>0$ is some universal constant. We also require $V_{T_k}$ to satisfy $\lvert V_{T_k}\cdot n(X_{T_k}) \rvert \le \bigl(\frac{d}{\sqrt{L}T}\bigr)^{1/2}$, where $n(X_{T_k})= \frac{\nabla U(X_{T_k})}{|\nabla U(X_{T_k})|}$, which has failure probability \begin{align*}
    \pp (\lvert V\cdot n(X) \rvert \ge \Bigl( \frac{d}{\sqrt{L}T} \Bigr)^{1/2}) & = \dfrac{1}{\sqrt{2\pi}}\int_{\bigl(\frac{d}{\sqrt{L}T}\bigr)^{1/2}}^\infty \exp(-\frac{r^2}{2})\ud r \\ & \le \dfrac{1}{\sqrt{2\pi}}\int_{\bigl(\frac{d}{\sqrt{L}T}\bigr)^{1/2}}^\infty \exp\Bigl(-\frac{r}{2} \bigl(\frac{d}{\sqrt{L}T}\bigr)^{1/2}\Bigr)\ud r \\ & \le \sqrt{\dfrac{2}{\pi}}\Bigl(\frac{\sqrt{L}T}{d}\Bigr)^{1/2} \exp(-\frac{d}{2\sqrt{L}T}).
\end{align*}
 Since we have to draw $V$ for $N$ times, cumulatively this yields a failure probability \begin{equation}
    p_b+p_c\le C\biggl(e^{-cd}+\Bigl(\frac{\sqrt{L}T}{d}\Bigr)^{1/2} \exp\Bigl(-\frac{d}{2\sqrt{L}T}\Bigr)\biggr)\E N.
\end{equation}
Recall the assumption $\varepsilon \ge \exp(-\frac{d}{8K\kappa \log d})$ as well as \eqref{eqn:smallDelta0} (and that $\kappa K\log K \le \frac{d}{4\log d}$), which implies that $\sqrt{L}T \le \frac{d}{2\log d}$ for our choice of $T$ as in \eqref{eqn:choiceT}. Together with condition \eqref{eqn:condN}, we derive (neglecting the obviously smaller term $e^{-cd}$) 
\begin{equation*}
    p_b+p_c\le C\sqrt{L}T \Bigl(\frac{\sqrt{L}T}{d}\Bigr)^{1/2} \exp\Bigl(-\frac{d}{2\sqrt{L}T}\Bigr) \le C\log^{-\frac{3}{2}}d.
\end{equation*}
The failure probability for condition \eqref{eqn:condX0} is straightforward to estimate. Using Assumption \ref{assump:condU}, we have \begin{equation*}
    U(X_0) \le \frac{L}{2}|X_0|^2,
\end{equation*}which indicates \begin{equation*}
    p_d \le  \eta = \pp(|X_0| \ge \sqrt{\frac{2d}{m}}).
\end{equation*}
Finally, $p_e$ is already estimated in Lemma~\ref{lem:cltsqt}, which yields $p_e\le \frac{2}{\sqrt{L}T}$. In summary, the total failure probability of \eqref{eq:condevents} can be bounded as
\begin{equation}\label{eq:failprob}
    p_a + p_b + p_c + p_d + p_e \le \frac{C}{\sqrt{L}T}+C\log^{-\frac{3}{2}} d +\eta.
\end{equation}

We now assume that condition \eqref{eq:condevents} holds. Thus, Lemma \ref{lem:supuxt} together with Assumption~\ref{assump:condU} implies that 
\begin{equation}\label{eqn:lengthx}
    \sup_{t\in[0,T]}\lvert X_t \rvert \le \Bigl(\dfrac{2}{m}\sup_{t\in[0,T]} U(X_t)\Bigr)^{1/2} \le C\Bigl( \frac{\sqrt{L}}{m} Td \Bigr)^{1/2}.
\end{equation}
After each refreshment or bouncing event, Algorithm~\ref{alg:zigzag} runs $d$ independent Poisson clocks $\{\tau_i\}_{i=1,\cdots,d}$ defined in Step~\ref{state:magrate} where, noticing $\sum_i \lvert V_i \rvert \le \sqrt{d}\lvert V \rvert \le 2d$, \begin{equation}
    \pp(\min \tau_i \ge t) \ge \exp\Bigl(-tL\lvert X\rvert\sum_i \lvert V_i \rvert-\frac{t^2}{2}\lvert V\rvert \sum_i \lvert V_i \rvert\Bigr) \ge \exp\bigl(-Cd^\frac{3}{2}(L^\frac{5}{4}m^{-\frac{1}{2}}T^\frac{1}{2}t+t^2)\bigr).
\end{equation} This motivates us to consider the following counting process $\tilde{N}_t$: suppose $\tdet_1,\cdots$ are i.i.d. random variables with $\pp(\tdet_i \ge s)=\exp(-As-Bs^2)$ where $A=Cd^{\frac{3}{2}}L^\frac{5}{4}m^{-\frac{1}{2}}T^\frac{1}{2}$ and $B=Cd^{\frac{3}{2}}$, and let $\tilde{N}_t=\inf_n \{\sum_{i=1}^n \tdet_i>t\}$. By construction, the probability of $N>8AT$ under condition \eqref{eq:condevents} is controlled by $\pp(\tilde{N}_T>8AT)$. 
Therefore, it suffices to estimate  $\pp(\tilde{N}_T>8AT)$. 

We compute the expectation of $\tdet_1$ (here notice $A\gg B\gg 1$): \begin{align*}
    \E \tdet_1 & = \int_0^\infty s(A+2Bs) \exp(-As-Bs^2) \ud s \ge \int_0^\frac{A}{B} s(A+2Bs)\exp(-2As) \ud s  \\ & = \frac{1}{4A}+ \frac{B}{2A^3}-\Bigl(\frac{3A}{2B}+\frac{5}{4A}+\frac{B}{2A^3}\Bigr)e^{-\frac{2A^2}{B}}\ge \frac{1}{4A}. \stepcounter{equation} \tag{\theequation}  \label{eqn:exptdet1}
\end{align*}On the other hand, \[
    \E \tdet_1^2 \le \int_0^\infty s^2(A+2Bs) \exp(-As) \ud s= \frac{2}{A^2}+ \frac{12B}{A^4} \le \frac{33}{16A^2}. \]Therefore we may appeal to Kolmogorov's inequality \cite{durrett2019probability}*{Theorem 2.5.2} (here $S_n$ denotes $\sum_{i=1}^n \tdet_i$): \begin{align*} \pp(\tilde{N}_T > 8AT) & = \pp (S_{8AT}<T) = \pp (S_{8AT}- \E S_{8AT}<T - \E S_{8AT}) \leftstackrel{\eqref{eqn:exptdet1}}{\le} \pp (S_{8AT}- \E S_{8AT}<-T) \\ & \le \frac{1}{T^2}\var S_{8AT} = \frac{8A}{T} \var \tdet_1 \le \frac{16}{AT} \le \frac{C}{\sqrt{L}T}.\end{align*}
To sum up, we have established that with high probability the number of partial derivative evaluations is bounded by  \begin{equation*}O(AT)=O(d^{\frac{3}{2}}L^\frac{5}{4}m^{-\frac{1}{2}}T^\frac{3}{2})=O\Bigl(d^\frac{3}{2}\kappa^2 \bigl(\log^\frac{3}{2}\frac{1}{\varepsilon}+\log^\frac{3}{2} \chi^2(\mu_0\,\Vert\,\mu)\bigr)\Bigr).\qedhere
\end{equation*} 
\end{proof}

\section{Discussion} \label{sec:disc}

We establish non-asymptotic complexity bounds for the zigzag sampling algorithm. While we focus on zigzag sampler in this work, we expect that similar analysis for other PDMPs \cites{bouchard2018bouncy,vanetti2017piecewise, michel2014generalized, bierkens2020boomerang} can be carried out. We leave these for future research.

\smallskip

We admit that our warm-start requirement \eqref{eqn:smallDelta0} may be stringent.
We observe that \eqref{eqn:smallDelta0} implicitly requires the condition number $\kappa$ to be much smaller than $d$, as otherwise, if $\kappa \sim d$, \eqref{eqn:smallDelta0} requires $\chi^2(\mu_0\,\Vert\,\mu) = O(1)$ which is unrealistic. Corollary \ref{cor:lmczzfeasiblest} essentially requires $\kappa \ll d^\frac{4}{9}$ for the analysis to hold. This restriction on condition number is not completely unexpected since the zigzag sampler does perform poorly for highly anisotropic densities (see for example numerical results in \cite{michel2014generalized}). 

A major issue of the warm-start assumption comes from our choice of $\chi^2$ divergence, rather than total variation, $2$-Wasserstein distance, or KL divergence as in previous works for non-asymptotic analysis of sampling algorithms. In particular, if we choose the initial condition \begin{equation}\label{eqn:mu0}
    \ud \mu_0(x)=(\dfrac{L}{2\pi})^\frac{d}{2}\exp(-\frac{L|x|^2}{2}) \ud x ,
\end{equation}as in previous works, then for $U(x)=\frac{m|x|^2}{2}$, we have \begin{align*}
    \chi^2(\mu_0\,\Vert\,\mu) & = Z(\frac{L}{2\pi})^d\int_{\R^d} \exp(-L|x|^2+U(x))\ud x -1\\ & = \kappa^\frac{d}{2}(\frac{L}{2\pi})^\frac{d}{2}\int_{\R^d} \exp\bigl(-(L-\frac{m}{2})|x|^2 \bigr)\ud x -1 = \kappa^\frac{d}{2}(\frac{L}{2L-m})^\frac{d}{2}-1,
\end{align*} which violates \eqref{eqn:smallDelta0}. On the other hand, for the same choice of $\mu_0$, as long as $U$ satisfies Assumption \ref{assump:condU}, one can estimate \begin{align*}
    \mathrm{KL}(\mu_0\,\Vert\,\mu) = (\dfrac{L}{2\pi})^\frac{d}{2} \int_{\R^d} \Bigl(\frac{d}{2}\log \frac{L}{2\pi}+\log Z -\frac{L}{2}|x|^2+U(x)\Bigr) \exp(-\frac{L}{2}|x|^2)\ud x\le \frac{d}{2}\log \kappa.
\end{align*}
This means $\log \mathrm{KL}(\mu_0\,\Vert\,\mu)$, and consequently the logarithm of total variation or $2$-Wasserstein distances are much smaller than any algebraic power of $d$, making it suitable for initialization. We hope the following conjecture is true: \begin{conjecture}Under Assumption \ref{assump:condU}, there exists a universal constant $K$ independent of all parameters, such that for any initial density $\barmu_0$, the zigzag process with friction parameter $\lambda = \sqrt{L}$ satisfies
\begin{equation*}
   \mathrm{KL}(\rho(X_T,V_T) \,\Vert\, \barmu) \le K\exp \bigl(- \frac{m}{K\sqrt{L}}T \bigr)\,
   \mathrm{KL}(\barmu_0 \,\Vert\, \barmu).
\end{equation*} 
\end{conjecture}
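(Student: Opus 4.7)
The plan is to adapt Villani's entropic hypocoercivity framework to the zigzag process, since Proposition \ref{prop:choiceT} already gives chi-square convergence via the $L^2$ spectral method of \cite{lu2020explicit} and a direct entropy analogue is natural. Setting $f_t = d\rho_t/d\barmu$, I would write $H(t) = \int f_t \log f_t \, d\barmu$ and compute $\frac{d}{dt} H(t) = -D(f_t)$ by splitting the generator into three pieces: (i) transport $v \cdot \nabla_x$, which is conservative and contributes nothing to the net entropy change; (ii) velocity refreshment at rate $\sqrt{L}$, which by the Gaussian log-Sobolev inequality dissipates at a rate bounded below by $\sqrt{L} \int \mathrm{KL}(\rho_t(\cdot \mid x) \,\Vert\, \nu) \, d\mu(x)$; and (iii) the bouncing events on each coordinate $j$ with rate $(v_j \partial_{x_j} U)_+$, which contribute a discrete-difference term $\int (f_t - f_t \circ R_j)(\log f_t - \log f_t \circ R_j)(v_j \partial_{x_j} U)_+ \, d\barmu$ with $R_j$ the $j$-th sign-flip.

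Since refreshment produces only $v$-marginal decay, the key hypocoercive step is to introduce a modified entropy
\begin{equation*}
\mathcal{E}_\varepsilon(f) = H(f) + \varepsilon \int \Phi(\nabla_x \log f, \nabla_v \log f)\, f \, d\barmu,
\end{equation*}
with $\Phi$ a suitable quadratic form, paralleling the $L^2$ cross-term construction in \cite{lu2020explicit} but applied to $\log f$ rather than $f$. The commutator identity $[v\cdot \nabla_x, \nabla_v] = \nabla_x$ should convert velocity-entropy dissipation into coercivity in $x$, which combined with the Bakry-Emery log-Sobolev inequality for $\mu$ (valid under Assumption \ref{assump:condU} with constant $O(1/m)$) would yield $\frac{d}{dt} \mathcal{E}_\varepsilon(f_t) \le -\frac{cm}{\sqrt{L}}\, \mathcal{E}_\varepsilon(f_t)$; the conjecture then follows from the equivalence of $\mathcal{E}_\varepsilon$ and $H$ for sufficiently small $\varepsilon$.

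The hard part is the bouncing term. Unlike refreshment, which is a Markov jump onto a log-concave Gaussian and admits the classical modified log-Sobolev inequality, the bouncing events are deterministic reflections of a single coordinate with nonsmooth, $(x,v)$-dependent rates given by the positive-part cutoff. Controlling how the cross-term $\int \nabla_x \log f \cdot \nabla_v \log f\, f\, d\barmu$ evolves under these reflections requires a $\Gamma_2$-type computation that is unavailable for pure-jump generators, since there is no Bakry-Emery calculus for $f \mapsto f \circ R_j$. One would need to develop a discrete substitute specific to sign-flip jumps, perhaps along the lines of Caputo-Dai Pra-Posta-type modified log-Sobolev inequalities for jump processes extended to the nonreversible zigzag setting, or to isolate a tractable antisymmetric part of the bouncing generator mirroring the $L^2$ decomposition of \cite{lu2020explicit}. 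An alternative short-time route, using instantaneous regularization from KL to $\chi^2$ followed by Proposition \ref{prop:choiceT}, fails here because the zigzag generator lacks hypoellipticity and produces no such smoothing.
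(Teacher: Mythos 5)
This statement is posed as an open \emph{conjecture} in the paper (Conjecture~1), and the paper supplies no proof of it. Your submission is therefore not being compared against a reference argument; it is being judged on whether it actually closes the gap, and it does not -- nor do you claim it does.

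Your outline is a sensible research plan and it correctly mirrors the structure that would be natural here: split the generator into transport, refreshment, and bouncing; get $v$-marginal entropy dissipation from refreshment via the Gaussian log-Sobolev inequality; introduce a modified entropy with a cross-term playing the role of the auxiliary operator in the $L^2$ hypocoercivity of \cite{lu2020explicit}; and use the Bakry--\'Emery log-Sobolev inequality for $\mu$ to close the estimate with rate $O(m/\sqrt{L})$. You also correctly and candidly identify the obstruction: the bouncing part is a pure-jump, nonreversible operator with nonsmooth state-dependent rates $(v_j\partial_{x_j}U)_+$, and there is no $\Gamma_2$ calculus for it, so the evolution of the cross-term under bouncing is uncontrolled. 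That is the crux, and you have not resolved it. A few further points to be careful about if you pursue this: (i) the expression you wrote for the bouncing contribution to entropy dissipation, $\int (f - f\circ R_j)(\log f - \log f\circ R_j)(v_j\partial_{x_j}U)_+\ud\barmu$, is the formula for a \emph{reversible} jump kernel; the zigzag bounce is not reversible with respect to $\barmu$ (its invariance relies on a cancellation with the transport term), so the correct dissipation identity is different and its sign is not manifest -- this needs to be worked out before any estimate can be attempted. (ii) Villani's entropic hypocoercivity in the kinetic Fokker--Planck setting relies on diffusive regularization in $v$; here refreshment replaces diffusion, and the Dolbeault--Mouhot--Schmeiser style framework used in \cite{lu2020explicit} has, to date, only an $L^2$ (hence $\chi^2$) version for PDMPs, not an entropic one. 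So the proposal is a plausible direction, but the key lemma -- a discrete log-Sobolev or entropy--cross-term estimate compatible with the nonreversible sign-flip jump -- is exactly what is missing, and the conjecture remains open.
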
 
\noindent If this is indeed true, we can establish the convergence in KL divergence of the pure zigzag sampler using a feasible start, without using LMC for initialization.

\smallskip
  
Another interesting open question is whether one can find a tighter upper bound  than Step~\ref{state:magrate} of Algorithm \ref{alg:zigzag} in order to reduce the computational complexity, since it magnifies the proposed bouncing rates by $O(\sqrt{d})$. The following lemma, which might be of independent interest, provides a concentration bound for $|\partial_{x_i} U|$ so that we might be able to give up a small probability to obtain a much sharper bouncing rate control.
\begin{lemma}\label{lem:concdU} Let $U(x)$ satisfy Assumption \ref{assump:condU}, then for any $c>0$,
\begin{equation}
    \pp_{\mu}\Bigl(|\partial_{x_i} U|\ge 2\sqrt{L}+2c\sqrt{L}\log d\Bigr) \le 3d^{-c}.
\end{equation}
\end{lemma}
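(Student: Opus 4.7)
\textbf{Proof plan for Lemma \ref{lem:concdU}.} The plan is to show that, under $\mu$, the partial derivative $\partial_{x_i}U$ is sub-Gaussian with variance proxy $L$, after which the stated tail bound will follow with considerable slack. The engine will be iterated integration by parts against the density $Z^{-1}e^{-U}$; since Assumption \ref{assump:condU} gives $U(x)\ge \frac{m}{2}|x|^2$, the density $e^{-U}$ decays at least as fast as a Gaussian, and all boundary terms in the integration by parts vanish (note that $\partial_{x_i}U$ grows at most linearly because $\nabla U$ is $L$-Lipschitz and vanishes at the origin).

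First I would observe that $\E_\mu[\partial_{x_i}U] = -Z^{-1}\int \partial_{x_i}(e^{-U})\,\ud x = 0$ by integration by parts. Then, for each $k\ge 1$, a second integration by parts gives
\[
\E_\mu[(\partial_{x_i}U)^{2k}] = -Z^{-1}\int(\partial_{x_i}U)^{2k-1}\partial_{x_i}(e^{-U})\,\ud x = (2k-1)\,\E_\mu[(\partial_{x_i}U)^{2k-2}\,\partial_{x_i}^2 U],
\]
and the Hessian bound $\partial_{x_i}^2 U\le L$ from Assumption \ref{assump:condU} will yield $\E_\mu[(\partial_{x_i}U)^{2k}]\le (2k-1)L\,\E_\mu[(\partial_{x_i}U)^{2k-2}]$. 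Induction will then deliver the Gaussian-matching bound $\E_\mu[(\partial_{x_i}U)^{2k}] \le (2k-1)!!\,L^k$.

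These moment bounds will assemble into a single $\cosh$-moment estimate
\[
\E_\mu\bigl[\cosh(\lambda\,\partial_{x_i}U)\bigr] = \sum_{k\ge 0}\frac{\lambda^{2k}}{(2k)!}\E_\mu[(\partial_{x_i}U)^{2k}] \le \sum_{k\ge 0}\frac{\lambda^{2k}L^k}{2^k k!} = e^{\lambda^2 L/2},
\]
and a Chernoff-type bound applied to $\cosh(\lambda\,\partial_{x_i}U)$, optimized at $\lambda=t/L$, will give $\pp_\mu(|\partial_{x_i}U|\ge t)\le 2e^{-t^2/(2L)}$. Finally, substituting $t = 2\sqrt{L}(1+c\log d)$ yields $t^2/(2L) = 2(1+c\log d)^2 \ge 2+4c\log d$, so
\[
\pp_\mu\bigl(|\partial_{x_i}U|\ge 2\sqrt{L}+2c\sqrt{L}\log d\bigr) \le 2e^{-2}d^{-4c} \le 3d^{-c},
\]
as required. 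The only mild technical point will be justifying the integration by parts, which follows from the tail decay comments above; no substantial obstacle is expected. In fact the argument delivers a substantially sharper bound than the lemma states, so the constant $3$ is quite generous.
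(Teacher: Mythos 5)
Your proof is correct and takes a genuinely different, more elementary route than the paper. The paper proceeds via the Brascamp--Lieb inequality (its Lemma~\ref{thm:braslieb}): it sets $G=\psi(\partial_{x_i}U)$ for a smooth truncation $\psi\approx|\cdot|$ (needed because $|\cdot|$ is not differentiable), applies Brascamp--Lieb to $g=\exp(\lambda G/2)$ to obtain the recursion $H(\lambda)\le\frac{1}{1-\lambda^2L}H(\lambda/2)^2$ for $H(\lambda)=\E_\mu e^{\lambda G}$, iterates it to get $H(\lambda)\le\frac{1+\lambda\sqrt L}{1-\lambda\sqrt L}e^{\lambda\E_\mu G}$, and closes with Chebyshev at $\lambda=1/(2\sqrt L)$. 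This yields only a sub-\emph{exponential} tail (the MGF blows up as $\lambda\to1/\sqrt L$), which is nevertheless enough to conclude. Your approach instead iterates direct integration by parts against $e^{-U}$ to obtain the Gaussian-matching moment bounds $\E_\mu[(\partial_{x_i}U)^{2k}]\le(2k-1)!!\,L^k$ (using $\partial_{x_ix_i}U\le L$), sums these into $\E_\mu\cosh(\lambda\,\partial_{x_i}U)\le e^{\lambda^2L/2}$, and runs Chernoff to get the genuinely \emph{sub-Gaussian} tail $\pp_\mu(|\partial_{x_i}U|\ge t)\le 2e^{-t^2/(2L)}$, which is strictly stronger than what the lemma asserts. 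Your argument also dispenses with the smooth truncation entirely since you work with moments of $\partial_{x_i}U$ rather than of $|\partial_{x_i}U|$. The justification of the vanishing boundary terms (via $U(x)\ge\frac m2|x|^2$ and $|\nabla U(x)|\le L|x|$) is exactly as you indicate, and one should note in passing that the interchange of sum and expectation in the $\cosh$ series is licensed by Tonelli since all terms are nonnegative and the resulting series is finite. What the paper's route buys is that it generalizes more directly to settings where one only has a Poincar\'e/Brascamp--Lieb-type functional inequality rather than explicit density structure; what yours buys is simplicity, a sharper constant, and a cleaner tail exponent.
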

\noindent The proof of this lemma, deferred to the appendix, is inspired by \cite{lee2020logsmooth}, which uses the following Brascamp-Lieb inequality \cite{brascamp2002extensions}:
\begin{lemma}\label{thm:braslieb}
Let $U(x)$ satisfy Assumption \ref{assump:condU}, then for any $g\in H^1(\mu)$, 
\begin{equation}\label{eqn:braslieb}
    \var_{\mu} g\le \int_{\R^d} \nabla g (\nabla^2 U)^{-1}\nabla g \ud \mu.
\end{equation}
\end{lemma}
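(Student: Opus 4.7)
The plan is to prove Lemma~\ref{thm:braslieb} by the classical ``$H^{-1}$ method'' combined with the Bochner identity, exploiting strong log-concavity. Without loss of generality I center $g$ so that $\E_\mu g = 0$, which does not affect either side of \eqref{eqn:braslieb}. Let $L$ denote the generator $Lu = -\Delta u + \nabla U \cdot \nabla u$ of the overdamped Langevin dynamics with invariant measure $\mu$; under Assumption~\ref{assump:condU} this operator is self-adjoint on $L^2(\mu)$ with a spectral gap, so the Poisson equation $Lu = g$ admits a unique centered solution $u$ with $\nabla u \in L^2(\mu)$.

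The first key step is to rewrite the variance by integration by parts in $L^2(\mu)$,
\[\var_\mu g = \int g \cdot Lu \, \ud\mu = \int \nabla g \cdot \nabla u \, \ud\mu,\]
and then apply a weighted Cauchy--Schwarz, splitting the integrand as $\nabla g^T (\nabla^2 U)^{-1/2}\cdot(\nabla^2 U)^{1/2}\nabla u$:
\[\var_\mu g \le \Bigl(\int \nabla g^T (\nabla^2 U)^{-1} \nabla g \, \ud\mu\Bigr)^{1/2}\Bigl(\int \nabla u^T (\nabla^2 U)\, \nabla u \, \ud\mu\Bigr)^{1/2}.\]

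The second key step is to bound the second factor via the Bochner identity. A direct coordinate computation yields the commutation relation $\nabla Lu = L\nabla u + (\nabla^2 U)\nabla u$, where $L$ now acts componentwise on vector fields. Pairing this with $\nabla u$ in $L^2(\mu)$ and integrating by parts gives
\[\int (Lu)^2 \, \ud\mu = \int \lVert \nabla^2 u \rVert_{\mathrm{HS}}^2 \, \ud\mu + \int \nabla u^T (\nabla^2 U)\, \nabla u \, \ud\mu,\]
so that, dropping the nonnegative first term and recalling $Lu=g$, one obtains $\int \nabla u^T (\nabla^2 U) \nabla u \, \ud\mu \le \var_\mu g$. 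Substituting this back into the Cauchy--Schwarz bound and cancelling a factor of $(\var_\mu g)^{1/2}$ produces exactly \eqref{eqn:braslieb}.

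The main obstacle I anticipate is rigorously justifying these integrations by parts when $g$ is only in $H^1(\mu)$, since the Bochner step implicitly requires $\nabla^2 u \in L^2(\mu)$. The plan to handle this is to first establish \eqref{eqn:braslieb} for $g \in C_c^\infty(\R^d)$, where elliptic regularity of $L$ furnishes a $u$ with all the needed smoothness and integrability, and then to extend by density and continuity of both sides of \eqref{eqn:braslieb} in the $H^1(\mu)$ norm. Equivalently, one may regularize $g$ by the Langevin semigroup $P_t g$, use the $\mathrm{CD}(m,\infty)$ curvature-dimension condition implied by Assumption~\ref{assump:condU} to secure uniform estimates, and then pass to the limit $t\downarrow 0$.
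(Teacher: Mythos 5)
Your proof is correct, and it is genuinely self-contained, whereas the paper does not prove Lemma~\ref{thm:braslieb} at all: it is stated as a black-box citation to Brascamp--Lieb. Your route is the classical $H^{-1}$/Bochner (also called H\"ormander or Helffer--Sj\"ostrand) method: solve the Poisson equation $Lu=g$ for the Langevin generator, write $\var_\mu g = \int \nabla g \cdot \nabla u\,\ud\mu$, split with a weighted Cauchy--Schwarz against $(\nabla^2 U)^{\pm 1/2}$, and control the $u$-factor by the commutation identity $\nabla Lu = L\nabla u + (\nabla^2 U)\nabla u$, which after pairing with $\nabla u$ gives $\int(Lu)^2 = \int\lVert\nabla^2 u\rVert_{\mathrm{HS}}^2 + \int \nabla u^T(\nabla^2 U)\nabla u$, hence $\int \nabla u^T(\nabla^2 U)\nabla u \le \var_\mu g$. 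All the algebra checks out, including the commutator computation and the integration-by-parts identities for the Dirichlet form. By contrast, Brascamp and Lieb's original argument proceeds via conditional measures and induction on dimension, a quite different mechanism. What the semigroup/Bochner route buys is a short, transparent proof that slots naturally into the Bakry--\'Emery framework; what the original buys is avoidance of any regularity discussion. You correctly flag the one real technical issue in your approach, namely that the Bochner step implicitly needs $\nabla^2 u \in L^2(\mu)$ and enough integrability to justify the integrations by parts, and your proposed fixes (prove for $g\in C_c^\infty$ and extend by density, or regularize by the semigroup $P_t g$ under $\mathrm{CD}(m,\infty)$) are the standard way to close this gap; note that solvability of $Lu=g$ on mean-zero functions is indeed available here because $m>0$ gives a Poincar\'e inequality via Bakry--\'Emery. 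One small point worth making explicit: if the right-hand side of \eqref{eqn:braslieb} is infinite the inequality is vacuous, so one may assume it finite, and since $(\nabla^2 U)^{-1}\le m^{-1}\mathrm{Id}$ this is automatic for $g\in H^1(\mu)$, so the density/limiting argument is against a finite quantity on both sides.
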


With Lemma \ref{lem:concdU}, it might be possible to improve Algorithm \ref{alg:zigzag} while surrendering a small probability by replacing Step ~\ref{state:magrate} with $\pp(\tau_i \ge s)=\exp(-cs\sqrt{L}|v_i|\log d)$ since  $(v_i\partial_{x_i} U(x+vs))_+ \le c\sqrt{L}|v_i|\log d$ with high probability. This motivates the following conjecture:
\begin{conjecture}
Under the Assumption \ref{assump:condU}, for any $\kappa$ and $\log \frac{1}{\varepsilon}$ that are both smaller than some algebraic power of $d$, there exists an algorithm that gives a random variable $X$ such that \begin{equation}
    \chi^2(\rho(X) \,\Vert\, \mu)\le \varepsilon.
\end{equation}
Moreover, with high probability, the algorithm requires $O\Bigl(d\kappa\log d \bigl(\log\frac{1}{\varepsilon}+\log\chi^2(\mu_0\,\Vert\,\mu)\bigr)\Bigr)$ evaluations of partial derivatives of $U$. 
\end{conjecture}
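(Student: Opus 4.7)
The plan is to modify Algorithm \ref{alg:zigzag} by replacing the Poisson rate upper bound at Step \ref{state:magrate} with a much sharper one derived from Lemma \ref{lem:concdU}. Specifically, one draws $\tau_i$ with survival $\pp(\tau_i \ge s) = \exp(-cs\sqrt{L}|v_i|\log d)$ for an appropriate universal constant $c$. Since this bound no longer depends on $|x|$ or $t$, the per-unit-time total proposed rate is only $O(\sqrt{L}\,d\log d)$ rather than the $O(L^{5/4}m^{-1/2}T^{1/2}d^{3/2})$ in the current analysis, eliminating simultaneously the $\sqrt{d}$ gap and the $T^{1/2}$ growth that came from bounding $|X_t|$ through Lemma \ref{lem:supuxt}. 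At a proposed event on coordinate $j$ one evaluates $\partial_{x_j}U(X_t)$ and thins: if $(v_j\partial_{x_j}U(X_t))_+ > c\sqrt{L}|v_j|\log d$ the concentration bound is violated and the algorithm declares failure; otherwise one accepts the bounce with probability $(v_j\partial_{x_j}U(X_t))_+/(c\sqrt{L}|v_j|\log d)$.

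First, the $\chi^2$-convergence step is unchanged: by Proposition \ref{prop:choiceT}, choosing $T$ as in \eqref{eqn:choiceT} guarantees the output has $\chi^2$-error at most $\varepsilon$, provided the modified thinning never aborts. The core task is then to bound the probability that the sharper rate is violated along the trajectory. Lemma \ref{lem:concdU} controls $|\partial_{x_i}U|$ only under the stationary measure $\mu$, giving failure probability $3d^{-c}$ at a single $\mu$-distributed point. To extend to the entire trajectory, I would proceed in two stages: (i) control suprema of $|\partial_{x_i}U(X_t)|$ between consecutive refresh times by combining the smoothness $|\partial_{x_i}U(x)-\partial_{x_i}U(y)|\le L|x-y|$ with the a priori bound on $\sup_t |X_t|$ from Lemma \ref{lem:supuxt} (this loses at most a polynomial-in-$d$ factor to the refresh-time values); (ii) transfer the $\mu$-concentration to the law $\rho(X_{T_k})$ via $\pp_{\rho(X_{T_k})}(A)\le (1+\chi^2(\rho(X_{T_k})\,\Vert\,\mu))^{1/2}\,\pp_\mu(A)^{1/2}$, then union-bound over the $N\approx\sqrt{L}T$ refresh intervals and the $d$ coordinates.

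With the concentration event in force, the number of proposed events is dominated by a renewal process with i.i.d.\ exponential interarrivals of mean at least $(c\sqrt{L}d\log d)^{-1}$ (using $\sum_i|V_{T_k}^{(i)}|\le \sqrt{d}\,|V_{T_k}|\le 2d$ after each refreshment, which holds with high probability by the Gaussian Annulus Theorem). A Kolmogorov-inequality argument essentially identical to the one at the end of the proof of Theorem \ref{thm:mainthm} upgrades this to a high-probability bound $O(d\sqrt{L}T\log d)$ on the total number of proposed events, and each such event costs one partial-derivative evaluation. Plugging in $T=O\bigl(\tfrac{\sqrt{L}}{m}(\log\tfrac{1}{\varepsilon}+\log\chi^2(\mu_0\,\Vert\,\mu))\bigr)$ yields precisely the conjectured $O\bigl(d\kappa\log d\bigl(\log\tfrac{1}{\varepsilon}+\log\chi^2(\mu_0\,\Vert\,\mu)\bigr)\bigr)$ complexity.

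The main obstacle is step (ii) of the concentration transfer. The Cauchy--Schwarz change of measure loses a factor of $(\chi^2(\rho(X_{T_k})\,\Vert\,\mu))^{1/2}$, which at early refresh times (before the process has equilibrated) can be of order $\chi^2(\mu_0\,\Vert\,\mu)^{1/2}$ and hence exponentially large in $d$, so a naive union bound overwhelms the $d^{-c}$ tail from Lemma \ref{lem:concdU}. Plausible remedies are (a) running an LMC or unmodified-zigzag warm-up phase to drive $\chi^2(\rho(X_\cdot)\,\Vert\,\mu)$ down to $O(1)$ before switching to the sharper bound, (b) propagating a Brascamp--Lieb-type variance estimate along the non-stationary zigzag evolution to obtain trajectory-wise concentration directly, or (c) using R\'enyi-divergence comparison with exponent $q>2$ chosen adaptively so that the moment of the Radon--Nikodym derivative is dominated by the $d^{-c}$ tail. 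Resolving this transfer is precisely what separates the conjecture from a theorem in the current work.
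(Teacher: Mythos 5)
This statement is a \emph{conjecture} that the paper explicitly leaves open, and your proposal follows precisely the heuristic route the paper itself sketches after Lemma \ref{lem:concdU} (replace Step \ref{state:magrate} by $\pp(\tau_i\ge s)=\exp(-cs\sqrt{L}|v_i|\log d)$ and hope the true rate never exceeds the proposal). You correctly identify that the concentration transfer from $\mu$ to the trajectory law is the missing piece, and your honesty about this is to your credit --- but there are two further gaps, one of which is more fundamental than the one you flag. First, your safeguard of ``declaring failure'' when $(v_j\partial_{x_j}U(X_t))_+ > c\sqrt{L}|v_j|\log d$ at a proposed event does not make the thinning valid: Poisson thinning requires the proposal rate to dominate the true rate \emph{at all times}, and a violation occurring between proposal times is never observed by the algorithm, so bounces are silently missed and the simulated trajectory deviates from the true zigzag process with no failure ever declared. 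Moreover, even if every violation were detectable, conditioning the output on the non-failure event changes its law, so the $\chi^2$ guarantee from Proposition \ref{prop:choiceT} no longer applies to what the algorithm actually returns. This is exactly the bias issue the paper names as the second obstruction, and your proposal does not resolve it.

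Second, your stage (i) --- controlling $\sup_t|\partial_{x_i}U(X_t)|$ between refresh times via the Lipschitz bound $|\partial_{x_i}U(x)-\partial_{x_i}U(y)|\le L|x-y|$ together with Lemma \ref{lem:supuxt} --- cannot afford the loss you concede. The a priori bound gives $\sup_t|X_t|\le C(\sqrt{L}Td/m)^{1/2}$, so the Lipschitz correction is of size $L\cdot\mathrm{poly}(d,\kappa)^{1/2}$, which swamps the target threshold $c\sqrt{L}\log d$ entirely; ``losing a polynomial-in-$d$ factor'' here means losing the theorem, since the whole gain of the conjecture over Theorem \ref{thm:mainthm} is the replacement of a $\mathrm{poly}(d)$ rate by a $\log d$ rate. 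Combined with the change-of-measure loss of $\chi^2(\rho(X_{T_k})\,\Vert\,\mu)^{1/2}$ in stage (ii), which you rightly note can be exponentially large in $d$ before equilibration, neither stage of the transfer goes through as written. Your renewal/Kolmogorov counting argument at the end is fine and would indeed yield the conjectured $O(d\kappa\log d\,(\log\frac{1}{\varepsilon}+\log\chi^2(\mu_0\,\Vert\,\mu)))$ complexity \emph{if} the sharper rate bound held along the whole trajectory, but establishing that, together with an unbiased (or provably small-bias) simulation scheme, is exactly what separates the conjecture from a theorem; your proposal restates the plan rather than closing it.
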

Unfortunately there are several difficulties for proving the conjecture. One is that although $\partial_{x_i}U$ does not exceed $O(\log d)$ with high probability, we are unable to control the partial derivatives for a trajectory of the zigzag process. Another issue is that since some trajectories of the zigzag process may go to regions with partial derivatives exceeding $O(\log d)$, we do not always simulate the exact trajectories, which introduces bias in the sampling.

\bigskip  
\noindent\textbf{Acknowledgment.}
This work is supported in part by National Science Foundation via grants CCF-1910571 and DMS-2012286. We would like to thank Murat Erdogdu for pointing us to their complexity analysis of Langevin Monte Carlo in chi-square divergence \cite{erdogdu2020convergence} to remove the warm start assumptions.

\appendix
\section{Proof of Lemma \ref{lem:supuxt}}
\begin{proof} 
Let $\lambda(t)=V_t\cdot \nabla_x U(X_t)$. If no bouncing happens, then \begin{equation*}
    \dfrac{\ud}{\ud t}\lambda(t)= V_t^\top \nabla_x^2 U(X_t) V_t \le L\lvert V_t \rvert^2.
\end{equation*}
In addition, $\lambda(t)$ decreases when bouncing happens, since there is some positive $V_t^{(i)}\partial_{x_i}U(X_t)$ being changed to $-V_t^{(i)}\partial_{x_i}U(X_t)$ while $X_t$ and other $V_t^{(j)}$'s remain unchanged. Therefore, since $\lvert V_t \rvert$ does not change between refreshments, we have for any $t\in (0,T_{k+1}-T_k)$, \footnote{We remark here that $\lambda(t)$ is not well-defined at the bouncing times. Nevertheless, \eqref{eqn:lambdatk} still makes sense since $\lambda(t)$ decreases at the bouncing events, and since we only use \eqref{eqn:lambdatk} in the time integral sense, this will not cause any problem.} \begin{equation}\label{eqn:lambdatk}
    \lambda(T_k+t) \le \lambda(T_k)+tL \lvert V_{T_k} \rvert^2.
\end{equation}
 Notice for a convex function $U(x)$ that satisfy Assumption \ref{assump:condU}, we have by co-coercivity \begin{equation*}
    \lvert \nabla U(x) \rvert^2 \leq 2L U(x),
\end{equation*}
therefore for any $t\in [0,T_{k+1}-T_k)$, and any $\alpha>0$,
\begin{equation}\label{eqn:ptwiseuxt}\begin{aligned}
    U(X_{T_{k}+t})&=U(X_{T_k}) + \int_0^t \lambda(T_k+\tau)\ud \tau \\ & \le U(X_{T_k})+t\lambda (T_k) + \dfrac{Lt^2}{2}\lvert V_{T_k} \rvert^2 \\ & \leftstackrel{\eqref{eqn:condV1},\eqref{eqn:condV2}}{\le} U(X_{T_k})+t\Bigl(\frac{d}{\sqrt{L}T}\Bigr)^{1/2} \lvert \nabla U(X_{T_k}) \rvert + 2 Lt^2d \\ & \le U(X_{T_k})+t \Bigl( \frac{2d\sqrt{L}}{T}\Bigr)^{1/2} \sqrt{U(X_{T_k})} + 2Lt^2d \\ & \le (1+\alpha) U(X_{T_k}) + d\sqrt{L}t^2(\dfrac{1}{\sqrt{2}T \alpha }+2\sqrt{L}).
\end{aligned} \end{equation}
In particular, \begin{equation*}
     U(X_{T_{k+1}})\le (1+\alpha) U(X_{T_k}) + d\sqrt{L}t_{k+1}^2(\dfrac{1}{\sqrt{2}T \alpha }+2\sqrt{L}).
\end{equation*}
Choosing $\alpha= \frac{1}{\sqrt{L}T}$, we have
\begin{equation*}
     U(X_{T_{k+1}})\le (1+\alpha) U(X_{T_k}) + C dLt_{k+1}^2.
\end{equation*}
Now we apply the above formula iteratively and derive \begin{align*}
    U(X_T) & \le (1+\alpha)^{N+1} U(X_0) +CLd \sum_{k=1}^{N+1}(1+\alpha)^{N-k+1}t_k^2 \\ & \le (1+\alpha)^{N+1}\Bigl(U(X_0)+CLd\sum_{k=1}^{N+1} t_k^2\Bigr) \\ & \leftstackrel{\eqref{eqn:condX0},\eqref{eqn:cltsqt}}{\le}C\sqrt{L}Td. 
\end{align*} Here we used $\alpha =\frac{1}{\sqrt{L}T}=O(\frac{1}{N})$ so $(1+\alpha)^{N+1}=O(1)$, which is true due to \eqref{eqn:condN}, and that $\kappa \le \sqrt{L}T$, which is true with our choice of $T$ in \eqref{eqn:choiceT}.
\end{proof}

\section{Proof of Lemma \ref{lem:cltsqt}}
\begin{proof}
Let $\Xi= \sum_{k=1}^{N+1} t_k^2$. By properties of the Poisson process \cite{durrett1999essentials}, if we condition on $N$,  the distribution of $T_1,T_2,\cdots,T_N$ has the same joint distribution as that of $N$ i.i.d.~random variables uniformly distributed in $(0,T)$. This means
\begin{equation}\label{eqn:poisunif}
    \E (\Xi \mid N) =\dfrac{N!}{T^N}\int_{t_1+\cdots+t_N<T} \Bigl(\sum_{k=1}^N t_k^2 +\bigl(T-\sum_{k=1}^N t_k\bigr)^2\Bigr) \ud t_N\cdots \ud t_1.
\end{equation}
To calculate $\E(\Xi \mid N)$, let us define \begin{equation*}
    I_1(N,T)=\int_{t_1+\cdots+t_N<T}\Bigl(\sum_{k=1}^N t_k^2 +\bigl(T-\sum_{k=1}^N t_k\bigr)^2\Bigr) \ud t_N \cdots \ud t_1
\end{equation*} 
and compute $I_1(N,T)$ by induction in $N$. For $N = 0$, as the sum contains only one term, $I_1(0, T) = T^2$. An easy calculation shows that $I_1(1,T)=\frac{2}{3}T^3$. We will show in general  \begin{equation}\label{eqn:I1result}
I_1(N,T)=\frac{2(N+1)}{(N+2)!}T^{N+2}.
\end{equation} 
Indeed, suppose \eqref{eqn:I1result} holds for $N-1$, we want to prove \eqref{eqn:I1result} for $N$, the starting point of which is the following observation:
\begin{equation*}
     I_1(N,T)  =\int_{t_1+\cdots+t_N<T} t_1^2 \ud t_N\cdots \ud t_1 +\int_0^T I_1( N-1,T-t_1)\ud t_1.
\end{equation*}
The first integral can be treated by integrating the variables one by one, from $t_N$ to $t_{N-1}$ and then $t_{N-2}$, etc. \begin{equation}\label{eqn:I1t12}\begin{aligned}
    \int_{t_1+\cdots+t_N<T} t_1^2 \ud t_N\cdots \ud t_1 & = \int_{t_1+\cdots+t_{N-1}<T} t_1^2(T-t_1-\cdots-t_{N-1}) \ud t_{N-1}\cdots \ud t_1 \\ & = \frac{1}{2}\int_{t_1+\cdots+t_{N-2}<T} t_1^2(T-t_1-\cdots-t_{N-2})^2 \ud t_{N-2}\cdots \ud t_1 \\ & = \cdots \\ &= \dfrac{1}{(N-1)!}\int_0^T t_1^2(T-t_1)^{N-1} \ud t_1 = \dfrac{2}{(N+2)!}T^{N+2}.
\end{aligned}\end{equation}
By the induction assumption \eqref{eqn:I1result} for $N-1$ we have \begin{equation*}
    \int_0^T I_1( N-1,T-t_1)\ud t_1 = \int_0^T \dfrac{2N}{(N+1)!}(T-t_1)^{N+1}\ud t_1 = \dfrac{2N}{(N+2)!}T^{N+2}. 
\end{equation*}
Combining above with \eqref{eqn:I1t12} we finish the proof for $N$. Therefore 
\begin{equation*}
     \E (\Xi \mid N)=\dfrac{N!}{T^N}\dfrac{2(N+1)T^{N+2}}{(N+2)!}=\dfrac{2T^2}{N+2}.
\end{equation*}
The full expectation $\E \Xi$ follows as $N$ is a Poisson random variable 
\begin{align*}
    \E \Xi &= \sum_{n=0}^\infty \E (\Xi \mid N=n) \pp(N=n) \\ &= \sum_{n=0}^\infty \dfrac{2T^2}{n+2}\dfrac{(\sqrt{L}T)^n}{n!}e^{-\sqrt{L}T} \\ & = 2T^2 e^{-\sqrt{L}T}\sum_{n=0}^\infty \Bigl(\dfrac{(\sqrt{L}T)^n}{(n+1)!}-\dfrac{(\sqrt{L}T)^n}{(n+2)!}\Bigr) \\ & = \dfrac{2T}{\sqrt{L}}-\frac{2}{L}+\frac{2e^{-\sqrt{L}T}}{L}\le \frac{2T}{\sqrt{L}}.
\end{align*}

To get the desired estimate, we apply Chebyshev's inequality using the second moment. By the same arguments leading towards \eqref{eqn:poisunif}, we have \begin{equation*}
    \E (\Xi^2 \mid N) =\dfrac{N!}{T^N}\int_{t_1+\cdots+t_N<T}\Bigl(\sum_{k=1}^N t_k^2 +(T-\sum_{k=1}^N t_k)^2\Bigr)^2.
\end{equation*} 
Denote\[I_2(N,T)=\int_{t_1+\cdots+t_N<T}\Bigl(\sum_{k=1}^N t_k^2 +(T-\sum_{k=1}^N t_k)^2\Bigr)^2.\] 
Using the same induction argument as the proof of \eqref{eqn:I1result}, we can prove \begin{equation*}
    I_2(N,T)=\dfrac{4(N+1)(N+6)}{(N+4)!}T^{N+4}. 
\end{equation*}
This can be easily verified for $N=0,1$ and the induction follows form the calculation: 
\begin{align*}
    I_2(N,T) & =\int_{t_1+\cdots+t_N<T} t_1^4 +\int_0^T I_2( N-1,T-t_1)\ud t_1 +2\int_0^T t_1^2 I_1(N-1,T-t_1)\ud t_1 \\ &= \dfrac{1}{(N-1)!}\int_0^T t_1^4(T-t_1)^{N-1} \ud t_1 +\dfrac{4N(N+5)}{(N+3)!} \int_0^T (T-t_1)^{N+3}\ud t_1 \\ & \qquad +\dfrac{4N}{(N+1)!}\int_0^T t_1^2(T-t_1)^{N+1} \ud t_1 \\ &=\dfrac{4(N+1)(N+6)}{(N+4)!}T^{N+4}.
\end{align*}
This shows $\E(\Xi^2\mid N)=\frac{N!}{T^N}I_2(N,T)=\frac{4(N+6)}{(N+2)(N+3)(N+4)}T^4$, and therefore \begin{align*}
    \E \Xi^2 &= \sum_{n=0}^\infty \E (\Xi^2 \mid N=n) \pp(N=n) \\ & = \sum_{n=0}^\infty \dfrac{4T^4(n+6)}{(n+2)(n+3)(n+4)}\dfrac{(\sqrt{L}T)^n}{n!}e^{-\sqrt{L}T} \\ &= 4T^4e^{-\sqrt{L}T} \sum_{n=0}^\infty \Bigl(\dfrac{(\sqrt{L}T)^n}{(n+2)!}-6\dfrac{(\sqrt{L}T)^n}{(n+4)!}\Bigr) \\ & = \frac{4T^2}{L}-\frac{24}{L^2}+8e^{-\sqrt{L}T}(\frac{T^2}{L}+\frac{3T}{L^\frac{3}{2}}+\frac{3}{L^2}).
\end{align*}
This means 
\begin{equation*}
    \E (\Xi-\E\Xi)^2 = \frac{8T}{L^\frac{3}{2}}-\frac{28}{L^2}+8e^{-\sqrt{L}T}(\frac{T^2}{L}+\frac{2T}{L^\frac{3}{2}}+\frac{4}{L^2}-\frac{4e^{-\sqrt{L}T}}{L^2})\le \frac{8T}{L^\frac{3}{2}},
\end{equation*}
where the inequality above holds for $\sqrt{L}T$ larger than some universal constant (which we would assume as it is the interesting parameter regime). 

Finally, to conclude the proof, we apply Chebyshev inequality to  estimate the failure probability as
\begin{equation*}
    \pp (\Xi\ge \frac{4T}{\sqrt{L}}) \le \pp(\Xi-\E \Xi \ge \frac{2T}{\sqrt{L}}) \le \frac{L\E (\Xi-\E\Xi)^2}{4T^2} \le \frac{2}{\sqrt{L}T}. \qedhere
\end{equation*}
\end{proof}
\section{Proof of Lemma \ref{lem:concdU}}
\begin{proof}
The first step is to show that \begin{equation}\label{eqn:avgdU}
    \E_{\mu} |\partial_{x_i}U| \le \sqrt{L}.
\end{equation}
This is straightforward, since using integration by parts,
\begin{equation}
    \E_{\mu} |\partial_{x_i}U|^2=\int_{\R^d} (\partial_{x_i}U)^2 \ud \mu = \int_{\R^d} \partial_{x_i x_i}U \ud \mu \le L,
\end{equation} 
and \eqref{eqn:avgdU} then follows from Cauchy-Schwarz inequality.

The next step is to establish a concentration bound. Let $G(x)=\psi(\partial_{x_i}U)$, where $\psi(a)=\psi(|a|)$ is a smooth nonnegative increasing function satisfying \begin{equation*}
    \psi(0)=\psi'(0)=0, \ \psi(a)=|a| \ \text{ for }|a|\ge 1, \ \text{and }|\psi'(a)|\le 2,
\end{equation*} and $g(x)=\exp(\frac{1}{2}\lambda G(x))$. By the construction of $G$, we have \begin{equation}\label{eqn:emug}
    \E_\mu G=\E_\mu \psi(\partial_{x_i}U) \le 2\E_\mu |\partial_{x_i} U| \le 2\sqrt{L}.
\end{equation}Then $\nabla g(x)=\frac{\lambda}{2}\psi'(\partial_{x_i} U) \nabla (\partial_{x_i} U)  g(x) $. By Lemma \ref{thm:braslieb} for $g(x)$, we have
\begin{align*}
    \E_{\mu} \exp(\lambda G) & -\bigl(\E_\mu \exp\bigl(\dfrac{\lambda G}{2}\bigr)\bigr)^2 = \var_{\mu} g(x) \\
    & \le  \dfrac{\lambda^2}{4}\int_{\R^d}(\psi'(\partial_{x_i} U))^2 \nabla (\partial_{x_i}U) (\nabla^2 U)^{-1}\nabla (\partial_{x_i} U) g^2(x)\ud \mu \\ & \le \lambda^2 \int_{\R^d}  \nabla (\partial_{x_i}U) (\nabla^2 U)^{-1}\nabla (\partial_{x_i} U) g^2(x)\ud \mu  \\ & = \lambda^2 \int_{\R^d} \partial_{x_i x_i} U g^2(x)\ud \mu \le \lambda^2 L\E_\mu \exp(\lambda G).
\end{align*}
Thus for $\lambda \le \frac{1}{2\sqrt{L}}$ we have \begin{equation}\label{eqn:recursive}
    \E_\mu \exp(\lambda G) \le \dfrac{1}{1-\lambda^2 L}\bigl(\E_\mu \exp(\dfrac{\lambda G}{2})\bigr)^2.
\end{equation}
Now we use \eqref{eqn:recursive} recursively, and we obtain for $H(\lambda):=\E_\mu \exp(\lambda G)$,
\begin{equation}\label{eqn:Hlambda}
    H(\lambda)\le \prod_{k=0}^\infty \Bigl(\dfrac{1}{1-\frac{\lambda^2 L}{4^k }}\Bigr)^{2^k}\lim_{\ell\to \infty}H(\dfrac{\lambda}{\ell})^\ell.
\end{equation}
Notice 
\begin{equation}\label{eqn:Hlamel}
    \lim_{\ell\to \infty} H(\dfrac{\lambda}{\ell})^\ell = \lim_{\ell\to \infty}\Bigl(\E_\mu \exp(\dfrac{\lambda G}{\ell}) \Bigr)^\ell =\lim_{\ell\to \infty} \Bigl( 1+\E_\mu\dfrac{\lambda G}{\ell} \Bigr)^\ell = \exp(\lambda\E_\mu  G).
\end{equation} 
Moreover, by \cite{bobkov1997poincare}*{Proposition 4.1}, 
\begin{equation}\label{eqn:prod}
    \prod_{k=0}^\infty \Bigl(\dfrac{1}{1-\frac{\lambda^2 L}{4^k }}\Bigr)^{2^k} \le \frac{1+\lambda \sqrt{L}}{1-\lambda\sqrt{L}}.
\end{equation}Substituting \eqref{eqn:Hlamel} and \eqref{eqn:prod} into \eqref{eqn:Hlambda}, we obtain
\begin{equation*}
    H(\lambda) \le \dfrac{1+\lambda \sqrt{L}}{1-\lambda \sqrt{L}}\exp(\lambda \E_\mu G).
\end{equation*}
Finally, combining the above exponential moment bound of $G$ with Chebyshev inequality, we get 
\begin{equation*}
    \pp_\mu \Bigl(G(x)\ge \E_\mu G+ r \Bigr) \le \exp(-\lambda r)\dfrac{1+\lambda \sqrt{L}}{1-\lambda \sqrt{L}}.
\end{equation*}
Now take $\lambda=1/2\sqrt{L}$, and $r=2c\sqrt{L}\log d$, and using \eqref{eqn:emug} (noticing $G(x)=|\partial_{x_i} U|$ when $G(x)\ge r$ since $r\ge 1$), we arrive at
\begin{equation*}
     \pp_\mu \Bigl(|\partial_{x_i} U|\ge 2\sqrt{L}+ 2c\sqrt{L}\log d \Bigr) \le 3d^{-c}. \qedhere
\end{equation*}
\end{proof}

\bibliographystyle{amsplain}
\bibliography{main}

\end{document}